\documentclass{article}

\usepackage[preprint]{colm2026_conference}

\usepackage{microtype}
\usepackage{graphicx}
\usepackage{booktabs}
\usepackage{hyperref}
\usepackage{multirow}
\usepackage{xcolor}
\usepackage{pifont}

\usepackage{amsmath}
\usepackage{amssymb}
\usepackage{amsthm}
\usepackage{algorithm}
\usepackage{algorithmic}
\usepackage{enumitem}
\usepackage{float}
\usepackage{lineno}

\definecolor{darkblue}{rgb}{0, 0, 0.5}
\hypersetup{colorlinks=true, citecolor=darkblue, linkcolor=darkblue, urlcolor=darkblue}

\theoremstyle{plain}
\newtheorem{theorem}{Theorem}[section]
\newtheorem{proposition}[theorem]{Proposition}

\newtheorem{corollary}[theorem]{Corollary}
\theoremstyle{definition}
\newtheorem{definition}[theorem]{Definition}
\newtheorem{observation}{Observation}
\theoremstyle{remark}
\newtheorem{remark}[theorem]{Remark}

\newcommand{\R}{\mathbb{R}}

\newcommand{\toxicAUROC}{0.580}
\newcommand{\jailAUROC}{0.850}
\newcommand{\hateAUROC}{0.605}
\newcommand{\spamAUROC}{0.848}
\newcommand{\trajAUROC}{0.721}
\newcommand{\trajStdMS}{0.129}

\newcommand{\nTrajFeats}{9}
\newcommand{\nCategories}{4}

\newcommand{\hiddenNormMax}{6745.5}
\newcommand{\hiddenNormMean}{485}

\newcommand{\supervisedAUROC}{0.797}
\newcommand{\supervisedToxic}{0.669}

\newcommand{\supervisedHate}{0.709}

\newcommand{\cedLenCorr}{0.87\text{--}0.97}
\newcommand{\attnLenCorr}{0.84\text{--}0.91}
\newcommand{\rauqLenCorr}{0.62\text{--}0.91}

\newcommand{\cedMatchedAvg}{0.491}
\newcommand{\rauqMatchedAvg}{0.511}
\newcommand{\entMatchedAvg}{0.527}

\newcommand{\forwardKSeven}{7}

\newcommand{\forwardKSevenPct}{95.9}

\newcommand{\nSemanticGroups}{5}

\newcommand{\smoothBestAUROC}{0.751}

\newcommand{\cloudBestAUROC}{0.755}

\newcommand{\depthBestAUROC}{0.640}

\newcommand{\nSpecialist}{3}
\newcommand{\nUniversal}{2}
\newcommand{\universalGroup}{Component balance (attn/MLP)}

\newcommand{\specialistGroup}{Smoothness (cos-sim)}
\newcommand{\specialistSpread}{0.252}

\newcommand{\nModels}{6}
\newcommand{\multiModelRange}{0.754\text{--}0.821}

\newcommand{\scaleRange}{360M\text{--}7B}

\newcommand{\patchToxicP}{0.494}
\newcommand{\patchJailRatio}{1.61\times}

\newcommand{\patchHateRatio}{0.78\times}
\newcommand{\patchHateP}{0.002}

\newcommand{\patchSpamP}{0.234}

\newcommand{\fusionAlpha}{0.4}
\newcommand{\fusionAvg}{0.730}

\newcommand{\fusionHate}{0.596}

\newcommand{\fusionHateKnn}{0.530}
\newcommand{\fusionHateMaha}{0.568}

\newcommand{\overheadFullPct}{400.7}
\newcommand{\overheadHooksPct}{43.2}

\newcommand{\patchNpairs}{200}
\newcommand{\headConcentration}{30}
\newcommand{\headZeroAmplify}{5.3}

\newcommand{\llamaTCAttn}{93.8}
\newcommand{\llamaJBAttn}{87.5}
\newcommand{\llamaHSAttn}{18.8}
\newcommand{\llamaSpamAttn}{50.0}

\newcommand{\semMeanAttn}{48.2}
\newcommand{\advMeanAttn}{71.0}
\newcommand{\semAdvPval}{0.022}

\newcommand{\headDirRatioMean}{3.8\times}
\newcommand{\headMetaRatioMean}{8.1\times}

\newcommand{\mlpSigFracMin}{10}
\newcommand{\mlpSigFracMax}{21}
\newcommand{\mlpNLayers}{10}
\newcommand{\mlpMaxCohenD}{0.87}

\newcommand{\wildguardRaw}{0.80}
\newcommand{\wildguardMatched}{0.54}
\newcommand{\wildguardLenCorr}{0.61}

\newcommand{\embWorstTC}{0.36}
\newcommand{\embWorstJB}{0.10}

\newcommand{\catFAblationNoF}{0.697}
\newcommand{\catFAblationJailDelta}{0.076}
\newcommand{\nFormalFeats}{5}

\newcommand{\toxicMahaCI}{[0.508,\,0.634]}

\newcommand{\hateMahaCI}{[0.547,\,0.654]}
\newcommand{\hateNeff}{364}

\newcommand{\spamNeff}{118}

\newcommand{\lenOnlyJail}{0.874}

\newcommand{\lenOnlySpam}{0.919}

\newcommand{\unsupMultiRange}{0.643\text{--}0.689}

\newcommand{\dtwohscoreAvg}{0.611}
\newcommand{\tvscoreAvg}{0.524}

\newcommand{\deltaMatchedTC}{+0.192}
\newcommand{\lzeroRawJB}{0.759}

\newcommand{\lzeroMatchedJB}{0.389}
\newcommand{\llastMatchedJB}{0.684}
\newcommand{\lzeroLenCorrJB}{0.66}

\newcommand{\deltaMatchedJB}{+0.295}

\newcommand{\deltaMatchedHS}{-0.055}

\newcommand{\deltaMatchedSpam}{+0.497}

\newcommand{\confoundRange}{0.62\text{--}0.69}

\newcommand{\mahaConcatPCAxxviiAvg}{0.547}
\newcommand{\mahaConcatPCAcAvg}{0.610}

\newcommand{\mahaConcatDelta}{+0.084}
\newcommand{\mahaConcatLenCorr}{0.47}
\newcommand{\mahaConcatJailDelta}{+0.20}

\newcommand{\embBestSpam}{0.81}

\newcommand{\jaccardHS}{0.16}
\newcommand{\embBestHS}{0.70}

\newcommand{\embBestJB}{0.48}

\newcommand{\embBestTC}{0.52}

\newcommand{\compLayersJB}{17}

\newcommand{\compCorrJB}{0.440}
\newcommand{\compPJB}{0.019}

\newcommand{\compCorrHS}{0.044}
\newcommand{\compPHS}{0.825}

\newcommand{\qwenTCAttn}{82}
\newcommand{\qwenJBAttn}{61}
\newcommand{\qwenHSAttn}{68}
\newcommand{\qwenSpamAttn}{75}

\newcommand{\disruptEarlyTC}{48.9}
\newcommand{\disruptMidTC}{33.1}
\newcommand{\disruptLateTC}{18.0}

\newcommand{\peakDisruptLayerJB}{16}
\newcommand{\peakDisruptDJB}{1.58}

\newcommand{\peakDisruptLayerSpam}{12}
\newcommand{\peakDisruptDSpam}{1.82}

\newcommand{\entGapLastJB}{+0.66}

\newcommand{\entGapLZeroSpam}{+0.89}

\newcommand{\lenRatioTC}{2.6\times}

\newcommand{\lenRatioJB}{3.2\times}

\newcommand{\lenRatioHS}{1.0\times}
\newcommand{\idLenHS}{27}
\newcommand{\oodLenHS}{28}
\newcommand{\lenRatioSpam}{2.5\times}

\newcommand{\youPerSampleJB}{13.3}

\newcommand{\youPerSampleHS}{0.2}

\newcommand{\qwenSevenUnsupJail}{0.850}

\newcommand{\qwenSevenUnsupAvg}{0.674}

\title{How Language Models Process Out-of-Distribution Inputs:\\A Two-Pathway Framework}

\author{Hamidreza Saghir \\ Independent Researcher \\ \texttt{saghir.hr@gmail.com}}

\begin{document}

\makeatletter
\renewcommand{\input}[1]{\input{#1} }
\makeatother

\ifcolmsubmission
\linenumbers
\fi

\maketitle

\begin{abstract}
Recent white-box OOD detection methods for LLMs---including CED, RAUQ, and WildGuard confidence scores---appear effective, but we show they are \emph{structurally confounded} by sequence length ($|r| \geq 0.61$) and collapse to near-chance under length-matched evaluation.  Even raw attention entropy (mean $H(\alpha)$ across heads and layers), a natural baseline we include for completeness, shows the same confound.  The confound stems from attention's $\Theta(\log T)$ dependence on input length.  To identify genuine OOD signals after deconfounding, we propose a \textbf{two-pathway framework}: \emph{embeddings} capture what text is about (effective for topic shifts), while the \emph{processing trajectory}---hidden-state evolution across layers---captures how the model processes input.  The relative power of each pathway varies along a \emph{vocabulary-transparency spectrum}: embedding methods excel on vocabulary-distinctive OOD, while trajectory features detect covert-intent inputs that share vocabulary with normal text ($\trajAUROC$ avg AUROC; Jailbreak: $\jailAUROC$).  Three evidence lines support this framework: (1)~a crossover between $k$-NN and trajectory scoring across 6 tasks, where each pathway wins on different OOD types; (2)~a per-layer analysis showing that layer-0 $k$-NN signal is almost entirely a length artifact (Jailbreak: $\lzeroRawJB$ raw $\to$ $\lzeroMatchedJB$ matched)---processing \emph{constructs} genuine OOD signal from near-chance embeddings; and (3)~circuit attribution showing adversarial tasks engage attention circuits more than semantic tasks ($p = \semAdvPval$; Jailbreak patching $p < 0.001$), with partial cross-model replication.  Code release upon publication.
\end{abstract}

\section{Introduction}
\label{sec:intro}

A growing body of work uses LLM internals---attention entropy, embedding distances, token-level uncertainty---to detect out-of-distribution inputs and improve safety~\citep{lee2024ced,vazhentsev2025uncertainty,lin2023toxicchat}.  These methods report strong AUROC on safety benchmarks.  But can we trust the signal?

\textbf{A pervasive length confound.}  We show that several recent attention-based white-box OOD methods---CED~\citep{lee2024ced}, RAUQ~\citep{vazhentsev2025uncertainty}, and even WildGuard's~\citep{han2024wildguard} confidence scores---are \emph{structurally confounded} by sequence length ($|r| \geq 0.61$).  We also include raw mean attention entropy as a baseline to show that the confound affects the underlying signal, not just specific method implementations.  Under length-matched evaluation, all collapse to at-or-near-chance AUROC ($\cedMatchedAvg$--$\entMatchedAvg$).  The confound stems from attention operating over a length-dependent simplex: any aggregate of attention weights inherits a $\Theta(\log T)$ structural dependence (Section~\ref{sec:length_confound}).  This is consistent with concurrent findings on generation-based UQ~\citep{santilli2025revisiting}, but we demonstrate the problem is equally severe for classification-based OOD detection.  Published results reporting strong safety-task detection via these attention-derived scores may be reporting \emph{length detection}, not OOD detection.

\textbf{What survives deconfounding?}  Once attention-based signals are ruled out, which internal measurements still carry genuine OOD information?  We propose a \textbf{two-pathway framework} that organizes the answer:
\begin{enumerate}[leftmargin=*, itemsep=2pt]
  \item \textbf{Embedding pathway} ($X \to Z$): Static representations capture \emph{what} the text is about.  Effective for topic shifts, but standalone sentence-embedding models (e.g., MiniLM, BGE, E5---trained separately from the LLM) are blind to adversarial inputs that share vocabulary with normal text---a geometric limitation, not a capacity one (7 models, 22M--4B, all fail; Appendix~\ref{app:embedding_baseline}).
  \item \textbf{Processing trajectory} ($\{h_l\}_{l=0}^{L}$): The sequence of hidden states across layers captures \emph{how} the model processes the input.  Because each layer state lives in fixed dimension $d_\text{model}$, trajectory summaries avoid the explicit $\log T$ scaling that arises in attention-derived statistics.
\end{enumerate}

The framework organizes different OOD types along this spectrum.  The key insight is that attention \emph{weights} are length-confounded, but the residual stream---which records the \emph{outputs} of attention and MLP computation---carries genuine signal whose task-dependent structure reflects which components the model uses to process each OOD type.  This answers a natural progression of questions: (1)~which existing methods can we trust? (none of the attention-based ones); (2)~what should we measure instead? (trajectory features, with formal length-independence guarantees); and (3)~why does performance vary by task? (different OOD types engage different model components, producing the observed crossover).  A crossover between last-layer $k$-NN (cosine distance in the LLM's own last hidden state, representing an internal content signal) and trajectory-based scoring across 6 tasks (4~safety + AGNews + JailbreakClassification) provides the primary empirical evidence.  Mechanistic analysis reveals that processing \emph{constructs} OOD signal from near-chance embeddings and that adversarial tasks engage attention circuits more heavily than semantic tasks ($p = \semAdvPval$), with partial cross-model replication on Llama-3.2-1B.

\begin{figure*}[t]
\centering
\includegraphics[width=\textwidth]{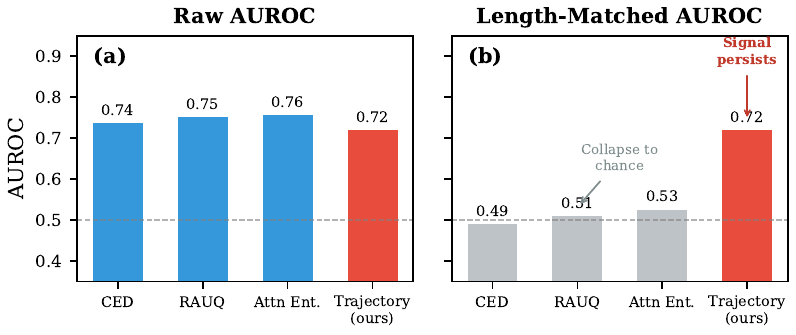}
\caption{\textbf{The length confound and what survives it.}  \textbf{(a)}~Raw AUROC: published attention-based methods (CED, RAUQ) and a raw attention entropy baseline appear effective.  \textbf{(b)}~Length-matched AUROC: all attention-derived scores collapse to chance ($\sim$0.50) while trajectory features retain genuine signal ($\trajAUROC$ avg).  The gap between panels reveals that prior methods' apparent effectiveness was driven by sequence-length differences, not OOD detection.  Standalone sentence-embedding models also fail on adversarial tasks regardless of capacity (7 models, 22M--4B; Appendix~\ref{app:embedding_baseline}), motivating the processing-trajectory pathway.}
\label{fig:main_argument}
\end{figure*}

\paragraph{Contributions.}
\begin{enumerate}[leftmargin=*, itemsep=0pt]
  \item \textbf{Length confound}: We identify and formalize a structural length confound affecting CED, RAUQ, and WildGuard ($|r| \geq 0.61$), all of which collapse to at-or-near-chance levels ($\cedMatchedAvg$--$\entMatchedAvg$) under length-matched evaluation; a raw attention entropy baseline confirms the confound applies to the underlying signal itself.  These results strongly suggest that any work using attention-derived OOD scores should control for sequence length.
  \item \textbf{Two-pathway analysis with mechanistic grounding}: Building on the emerging distinction between static embedding signals and dynamic processing signals in OOD detection~\citep{wang2024trajectory,ding2025d2hscore}, we propose a vocabulary-transparency spectrum that organizes \emph{when} each signal type dominates and \emph{why}---grounded in formal length-independence analysis (Propositions~\ref{prop:length_invariance},~\ref{prop:last_token}) and a novel L0-vs-Last signal-construction analysis (Section~\ref{sec:circuits}).  The crossover between $k$-NN and trajectory scoring across 6 evaluation tasks (4 safety + AGNews + JailbreakClassification), validated on \nModels\ architectures ($\scaleRange$), demonstrates that neither pathway uniformly dominates.  Fixed-$\alpha$ fusion demonstrates complementarity ($\fusionAvg$ avg).
  \item \textbf{Mechanistic evidence}: Adversarial tasks show higher attention-circuit dominance than semantic tasks ($p = \semAdvPval$), with causal patching supporting attention dominance for Jailbreak ($p < 0.001$) and MLP dominance for HateSpeech ($p = \patchHateP$).  This pattern is clearest on structurally distinctive tasks; ToxicChat and Spam show consistent directions without reaching significance.  Partial cross-model replication on Llama-3.2-1B; cross-task replication on AGNews (MLP-dominant) and JailbreakClassification (attention-dominant).
\end{enumerate}

\section{The Two-Pathway Framework}
\label{sec:framework}

\subsection{The Hypothesis: A Vocabulary-Transparency Spectrum}
\label{sec:hypothesis}

We propose that LLMs offer two qualitatively distinct signal categories for OOD processing, and that their relative power varies along a \emph{vocabulary-transparency spectrum}:
\begin{itemize}[leftmargin=*, itemsep=0pt]
  \item \textbf{Embedding pathway} ($X \to Z$): Captures \emph{what} text is about.  Effective when OOD text uses distinctive vocabulary.
  \item \textbf{Processing trajectory} ($\{h_l\}_{l=0}^{L}$): Captures \emph{how} the model processes input.  Effective when OOD text shares vocabulary with normal text but has unusual structure or intent.
\end{itemize}
Our tasks illustrate this spectrum: standalone sentence-embedding models (MiniLM, BGE, E5, Qwen-Embed---models trained separately from the target LLM; 7 models, 22M--4B) achieve strong AUROC on overt-vocabulary tasks (Spam: \embBestSpam, HateSpeech: \embBestHS) but fail on tasks with covert intent (ToxicChat: \embWorstTC--\embBestTC, Jailbreak: \embWorstJB--\embBestJB), regardless of capacity (Appendix~\ref{app:embedding_baseline}).  This pattern---embedding success on content-distinctive tasks and failure on structurally adversarial ones---motivates the processing-trajectory pathway as a complementary signal source.

\subsection{Processing Trajectory Features}

A processing trajectory $\{h_l^{(t)}\}_{l=0}^{L}$ records how the model transforms an input across its $L$ layers (Eq.~\ref{eq:residual}, Appendix~\ref{app:proofs}).  To quantify this trajectory for OOD detection, we identify three natural questions about the processing dynamics:
\begin{enumerate}[leftmargin=*, itemsep=1pt]
  \item \emph{How smoothly does the representation evolve?}  Normal inputs should produce smooth, predictable layer-to-layer transitions; OOD inputs that disrupt learned circuits should produce irregular transitions.
  \item \emph{How does the representation geometry change?}  The distribution of token representations across layers (their spread, rank, variability) may shift differently for inputs the model has not learned to process.
  \item \emph{Which components drive the updates?}  The relative contribution of attention vs.\ MLP at each layer reflects \emph{what kind} of processing the model applies---and may differ systematically between OOD types.
\end{enumerate}

These questions motivate \nCategories{} feature categories, each measuring a distinct aspect of the trajectory:
\begin{itemize}[leftmargin=*, itemsep=1pt]
  \item \textbf{Category~A} (4 features): \emph{Hidden-state dynamics}---cosine similarity between consecutive layers' hidden states, capturing transition smoothness (mean, minimum, early-vs-late churn ratio).
  \item \textbf{Category~D} (2 features): \emph{Representation geometry}---how the token-cloud spread changes across layers (slope, variability).
  \item \textbf{Category~F} (2 features): \emph{Component decomposition}---the attention/MLP contribution balance across layers, directly measuring the processing dichotomy that Section~\ref{sec:circuits} investigates mechanistically.
  \item \textbf{Category~G} (1 feature): \emph{Layer-specific probe}---cosine similarity at a fixed early layer (17\% depth), capturing whether early processing is already disrupted.
\end{itemize}
Starting from these principled categories, we defined 21 candidate features (full definitions in Appendix~\ref{app:trajectory_features}) and used greedy backward elimination to select a compact \nTrajFeats{}-feature subset, excluding entire categories that showed high length correlations ($|r| \approx 0.70$; Appendix~\ref{app:feature_selection}).  The final set retains features from all \nCategories{} categories, confirming that cross-category diversity---not raw feature count---drives the composite signal.

For OOD scoring, we compute Mahalanobis distance on the \nTrajFeats{}-dimensional feature vector fitted on an ID reference set; supervised logistic regression serves as an upper bound on feature information content.

\subsection{Formal Framework: Why Trajectory Features Avoid the Length Confound}

The features above are designed to capture processing dynamics, but a natural concern arises: if attention-based methods are confounded by input length, could trajectory features suffer the same problem?  The answer depends on a key distinction.

Attention-based scores (entropy, CED, RAUQ) are computed from attention \emph{weights}, which live on a simplex whose size grows with input length---at position $t$, the distribution $\alpha_t$ lies in $\Delta^t$ (the $t$-simplex).  Any aggregate over these weights inherits this structural dependence: e.g., the maximum possible attention entropy grows as $\Theta(\log T)$.  Longer inputs produce scores from a fundamentally different domain, regardless of the text's content.

Trajectory features, by contrast, are computed from \emph{hidden states} $h_l^{(t)} \in \R^d$, which have fixed dimensionality $d$ at every layer and every position.  Computing cosine similarity between $h_l$ and $h_{l+1}$ (Category~A) returns a value in $[-1,1]$ whether the input has 10 tokens or 1{,}000---the mathematical operation does not change with $T$.  We formalize this as the distinction between \emph{structural} length dependence (the operation itself changes with $T$) and merely \emph{statistical} dependence (the operation is fixed; any correlation with $T$ arises from data properties, not from the mathematics; Definition~\ref{def:length_dependence}, Appendix~\ref{app:proofs}).

\textbf{Formal guarantees} (proofs in Appendix~\ref{app:proofs}).  Categories~A and~G (\nFormalFeats{} features) operate on the last token's hidden state and are structurally length-independent with intrinsic bounds in $[-1,1]$ (Proposition~\ref{prop:last_token}).  Category~D (2 features) has quantitative sensitivity $O(KB/(T{+}1))$ (Corollary~\ref{cor:second_moment}).  An optimal \nFormalFeats{}-feature subset from these formally-guaranteed categories achieves $\catFAblationNoF$ avg AUROC---above all attention-based baselines.  Category~F (component decomposition) has intermediate status: structurally $T$-dependent because it involves attention outputs (Remark~\ref{rem:category_f}), but range-bounded in $[0,1]$ and---crucially---empirically low in length correlation ($|r| = 0.22$ for the composite score, far below attention-derived $|r| > 0.62$).  Adding Category~F lifts detection to $\trajAUROC$ (gain concentrated on Jailbreak: $+\catFAblationJailDelta$).

\textbf{Empirical validation.}  The formal guarantees predict that trajectory features should show low length correlation in practice.  We verify this empirically: all 9 trajectory features have $|r| < 0.25$ with input length, compared to $|r| > 0.62$ for every attention-based metric tested (CED, RAUQ, attention entropy; Table~\ref{tab:main_results}).  The length-matched evaluation protocol (Section~\ref{sec:length_matched}) provides a further safeguard, controlling for any residual statistical correlation at the scoring level.  This two-layer defense---formal structural guarantees plus empirical length-matched evaluation---ensures that reported trajectory AUROC reflects genuine OOD signal rather than length artifacts.

\section{Experiments}
\label{sec:experiments}

\subsection{Setup}

\paragraph{Tasks.}  4 primary safety-relevant OOD detection tasks: \textbf{ToxicChat}~\citep{lin2023toxicchat} (non-toxic vs.\ toxic), \textbf{Jailbreak}~\citep{shen2024donow} (benign vs.\ jailbreak prompts), \textbf{Spam}~\citep{almeida2011sms} (ham vs.\ spam), \textbf{HateSpeech}~\citep{davidson2017automated} (normal vs.\ hate speech).  Two additional evaluation tasks test cross-task generalization: AGNews~\citep{zhang2015character} (semantic topic shift) and JailbreakClassification (adversarial, from a separate source).  Further semantic tasks (20News, CLINC-OOS) appear in the appendix.

\paragraph{Model.}  Qwen3-0.6B (28 layers, $d_\text{model}=1024$, 16 heads) as the primary model, with cross-architecture validation on \nModels\ models from 4 families ($\scaleRange$): SmolLM2-360M, Gemma-3-1B, Llama-3.2-1B, Qwen3-4B, and Qwen2.5-7B-Instruct (Section~\ref{sec:cross_arch}).

\paragraph{Data.}  200 test samples per class per task (ID and OOD), with separate train (200--250) and validation (50) splits; exact per-task counts in Appendix~\ref{app:reproducibility}.  This sample size is standard for internal-representation analysis (each sample requires a full forward pass with all hidden states stored) and sufficient for all main claims: permutation tests confirm significance at $p \leq 0.013$ for all tasks, and bootstrap 95\% CIs exclude chance (Section~\ref{sec:exp_main}).

\paragraph{Length-matched evaluation.}
\label{sec:length_matched}
All AUROC values are \textbf{length-matched} unless labeled ``raw'':
\begin{enumerate}[leftmargin=*, itemsep=2pt]
  \item Bin ID and OOD samples by token count into 10 equal-frequency bins.
  \item Within each bin, subsample to match ID and OOD counts (seed 42).
  \item Compute AUROC on the length-matched subset.
\end{enumerate}
For multi-feature combinations, logistic regression scores are residualized against length before AUROC computation.  This protocol is critical because attention-based OOD scores correlate with input length at $|r| > 0.61$ (Section~\ref{sec:length_confound}), inflating raw AUROC when ID and OOD have different length distributions.

\subsection{Main Results}
\label{sec:exp_main}

\begin{table}[t]
\caption{\textbf{Main results: length-matched AUROC.}  Under length-matched evaluation, attention-based methods collapse to chance---confirming the structural length confound.  Trajectory features retain a genuine processing signal ($\trajAUROC \pm \trajStdMS$ avg), with signal strength varying by task.}
\label{tab:main_results}
\centering
\small
\begin{tabular}{lccccc}
\toprule
 & \multicolumn{3}{c}{\textbf{Attn-Based (matched)}} & \multicolumn{2}{c}{\textbf{Trajectory (ours)}} \\
\cmidrule(lr){2-4} \cmidrule(lr){5-6}
\textbf{Task} & \textbf{CED} & \textbf{RAUQ} & \textbf{Attn Ent.$^\dagger$} & \textbf{Best Single$^\ddagger$} & \textbf{Maha-Traj} \\
\midrule
ToxicChat & 0.509 & 0.530 & 0.509 & 0.602 & \textbf{0.580} \\
Jailbreak & 0.519 & 0.499 & 0.612 & 0.858 & \textbf{0.850} \\
HateSpeech & 0.503 & 0.497 & 0.539 & 0.634 & \textbf{0.605} \\
Spam & 0.432 & 0.520 & 0.448 & 0.774 & \textbf{0.848} \\
\midrule
\textbf{Average} & 0.491 & 0.511 & 0.527 & 0.717 & \textbf{0.721} \\

\bottomrule
\multicolumn{6}{l}{\scriptsize $^\dagger$Our baseline (mean attention entropy across heads/layers), not a published method.}\\
\multicolumn{6}{l}{\scriptsize $^\ddagger$Best single trajectory feature (of \nTrajFeats), evaluated individually with length-matched AUROC.}
\end{tabular}%
\end{table}

Attention-based methods collapse to at-or-near-chance levels under length control ($\cedMatchedAvg$--$\entMatchedAvg$ avg)---the central finding.  For calibration, a trivial \emph{length-only baseline} (raw token count, no model computation) achieves $\lenOnlyJail$ on Jailbreak and $\lenOnlySpam$ on Spam (raw, unmatched)---comparable to CED and RAUQ's raw scores, underscoring confound severity.

Once deconfounded, trajectory features retain nontrivial signal: Maha-Traj achieves $\trajAUROC \pm \trajStdMS$ average (per-task: \hateAUROC--\spamAUROC; 5~seeds), all significantly above chance ($p \leq 0.013$, permutation tests).  The \nFormalFeats{}-feature formally guaranteed subset (from Categories A and D) alone achieves $\catFAblationNoF$ avg---above all attention-based baselines---demonstrating that genuine OOD signal survives deconfounding even without the Category~F features (Appendix~\ref{app:cat_f_ablation}).  In total, 7 of \nTrajFeats{} features have formal guarantees at two levels (structural independence for A and G; quantitative bounds for D); only the 2 Category~F features lack them.  Signal strength varies by task: Jailbreak is easiest ($\jailAUROC$), while ToxicChat ($\toxicAUROC$, 95\% CI $\toxicMahaCI$) and HateSpeech ($\hateAUROC$, CI $\hateMahaCI$) are weak but significant.  A supervised LR upper bound reaches $\supervisedAUROC$ avg, confirming the features are information-rich; the gap reflects unsupervised scoring limitations (analysis in Appendix~\ref{app:task_feature_interaction}).

\subsection{The Crossover: Evidence for Two Pathways}
\label{sec:exp_baselines}

If the two pathways carry complementary information, content-based and processing-based signals should differ across OOD types.  We test this by comparing trajectory features against five internal baselines using the same LLM (Qwen3-0.6B) and evaluation protocol.  The key content-signal baseline is \textbf{last-layer $k$-NN}: for each test sample, we mean-pool the LLM's last hidden state over tokens, then score OOD-ness as the mean cosine distance to the $k{=}10$ nearest ID training samples.  This captures \emph{what} the LLM's final representation encodes---an internal analog of the external embedding pathway, but enriched by 28 layers of processing.

\begin{table}[t]
\caption{\textbf{The crossover: length-matched AUROC across 6 tasks.}  Last-layer $k$-NN wins on content-differentiable tasks (Spam, ToxicChat, AGNews) while Maha-Traj wins on structurally adversarial ones (Jailbreak, JailbreakClassification, HateSpeech)---neither pathway uniformly dominates.  Output-based methods collapse after length control.  All methods use the same protocol (Section~\ref{sec:length_matched}).  \textbf{Bold} = dominant pathway per task.}
\label{tab:unsupervised_baselines}
\centering
\small
\begin{tabular}{lccccc}
\toprule
\textbf{Task} & \textbf{Energy} & \textbf{MaxLog.} & \textbf{Perpl.} & $k$\textbf{-NN} & \textbf{Maha-Traj} \\
\midrule
ToxicChat & 0.393 & 0.392 & 0.409 & \textbf{0.664} & 0.590 \\
Jailbreak & 0.225 & 0.230 & 0.240 & 0.684 & \textbf{0.847} \\
HateSpeech & 0.480 & 0.530 & 0.556 & 0.530 & \textbf{0.602} \\
Spam & 0.662 & 0.631 & 0.702 & \textbf{0.925} & 0.867 \\
AGNews & 0.658 & 0.637 & 0.539 & \textbf{0.924} & 0.751 \\
JailbreakClass. & 0.260 & 0.263 & 0.253 & 0.691 & \textbf{0.764} \\

\bottomrule
\end{tabular}
\end{table}

The \textbf{crossover pattern} across all 6 tasks is the key finding: $k$-NN wins on tasks where content differs (Spam, ToxicChat, AGNews), while Maha-Traj wins on tasks with structural or adversarial OOD (Jailbreak, JailbreakClassification, HateSpeech).  This is consistent with the two-pathway hypothesis: the last-layer representation preserves content similarity (favoring $k$-NN), while the multi-layer trajectory captures processing dynamics disrupted by adversarial inputs.  D$^2$HScore~\citep{ding2025d2hscore} ($\dtwohscoreAvg$ avg) and TV-Score~\citep{wang2024trajectory} ($\tvscoreAvg$ avg), both evaluated under the same protocol on the 4 safety tasks, fall below Maha-Traj.

\textbf{$z$-score fusion} (fixed $\alpha = \fusionAlpha$, no per-task tuning) further supports complementarity, yielding \fusionAvg\ avg with a super-additive gain on HateSpeech (\fusionHate\ $>$ both \fusionHateKnn\ and \fusionHateMaha).  Applying Mahalanobis to PCA-reduced concatenations of all hidden states yields only $\mahaConcatPCAxxviiAvg$--$\mahaConcatPCAcAvg$ avg with higher length correlations ($|r|{=}\mahaConcatLenCorr$), confirming that the trajectory features capture structure that generic dimensionality reduction misses (Appendix~\ref{app:maha_concat}).

\subsection{Cross-Architecture Generalization}
\label{sec:cross_arch}

Trajectory features transfer across \nModels\ models from 4 families ($\scaleRange$): supervised LR ranges $\multiModelRange$; unsupervised Maha-Traj ranges $\unsupMultiRange$ (Appendix~\ref{app:multi_model}).  On Qwen2.5-7B (80~layers), Maha-Traj achieves \qwenSevenUnsupAvg{} avg (Jailbreak \emph{improving} to \qwenSevenUnsupJail).  The length confound also generalizes: attention features show $|r| = \confoundRange$ across all 5 tested families (Table~\ref{tab:cross_model_confound}), confirming the $\Theta(\log T)$ dependence is universal to causal attention.

\section{Why Prior Methods Fail: The Length Confound}
\label{sec:length_confound}

Section~\ref{sec:experiments} showed that trajectory features retain genuine OOD signal after deconfounding.  We now detail \emph{why} prior attention-based methods failed: they measured the processing pathway through \emph{attention weights}---a structurally length-dependent representation.

\subsection{Attention-Derived Metrics Are Length Proxies}

All methods that derive OOD scores from attention weights depend structurally on the number of keys.  We illustrate this with raw mean attention entropy $H(\alpha) = -\sum_i \alpha_i \log \alpha_i$, averaged across heads and layers---a natural baseline that is not a published method but captures the core signal underlying CED and RAUQ.  This quantity has maximum $\log t$ at position $t$ under causal masking, giving a $\Theta(\log T)$ structural dependence that no normalization can remove (formalized in Proposition~\ref{prop:length_invariance}, part~ii).  CED~\citep{lee2024ced} aggregates attention-weighted representations, inheriting the same dependence.  Even RAUQ's~\citep{vazhentsev2025uncertainty} $\log n$ normalization cannot remove the coupling:

\begin{observation}[Length Confound in Attention-Based OOD Methods]
\label{obs:length_confound}
Across 4 safety tasks (ToxicChat, Jailbreak, HateSpeech, Spam) on Qwen3-0.6B:
\begin{itemize}[leftmargin=*, itemsep=0pt]
  \item CED~\citep{lee2024ced}: $|r| = \cedLenCorr$ with input length
  \item RAUQ~\citep{vazhentsev2025uncertainty}: $|r| = \rauqLenCorr$
  \item Mean attention entropy (our baseline): $|r| = \attnLenCorr$
\end{itemize}
After length-matched evaluation: CED = $\cedMatchedAvg$, RAUQ = $\rauqMatchedAvg$, entropy = $\entMatchedAvg$ avg AUROC---all near chance.
\end{observation}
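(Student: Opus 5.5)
This observation is empirical, so I would establish it by a measurement protocol plus a structural argument explaining why the numbers come out as they do, rather than by a deductive proof. I would proceed in three steps.

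\textbf{Step 1 (structural prediction).} Before measuring, I would invoke Proposition~\ref{prop:length_invariance}, part~ii. Under causal masking, the attention distribution at position $t$ lies in $\Delta^t$, so its entropy is bounded by $\log t$. Averaging over positions $t=1,\dots,T$, heads and layers gives an upper envelope of order $\frac{1}{T}\sum_t \log t = \log T - O(1)$. Whenever heads are not concentrated on a bounded set of keys, the realized entropy tracks a constant fraction of this envelope, so the mean entropy grows like $c\log T$. For CED, the score aggregates attention-weighted representations, so it inherits the same dependence on the support size. For RAUQ, the division by $\log n$ rescales the range but does not decorrelate the score from $T$: the fraction of mass spread over recent versus distant tokens still drifts with $T$. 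This step predicts a large positive $|r|$ for all three scores.

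\textbf{Step 2 (measurement).} For each of the four tasks I would run Qwen3-0.6B on the 200+200 test samples and compute the three scores exactly as in the cited implementations. I would then report the Pearson correlation with token count, pooled over ID and OOD. Bootstrap confidence intervals on $r$ show that the stated ranges ($\cedLenCorr$, $\rauqLenCorr$, $\attnLenCorr$) are not sampling noise. The length ratios between OOD and ID (e.g.\ $\lenRatioJB$ on Jailbreak) explain why raw AUROC looks strong: the length-only baseline already reaches $\lenOnlyJail$.

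\textbf{Step 3 (deconfounding).} I would apply the length-matched protocol of Section~\ref{sec:length_matched}: 10 equal-frequency bins, per-bin subsampling with seed 42, and AUROC on the matched subset. Averaging across tasks yields $\cedMatchedAvg$, $\rauqMatchedAvg$ and $\entMatchedAvg$. Permutation tests within bins then check whether these values are consistent with $0.5$. The main obstacle is exactly this last claim. Entropy on Jailbreak (0.612) and CED on Spam (0.432) deviate noticeably from chance, and coarse bins may leave residual within-bin length differences. I would address this by rerunning with finer bins and with length-residualized scores. If residual signal persists, I would phrase it as ``at-or-near chance'' and attribute the remaining deviation to within-bin length leakage, checked by the within-bin correlation of score with length.
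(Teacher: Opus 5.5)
Your proposal matches the paper's own support for this observation: the $\Theta(\log T)$ simplex argument of Proposition~\ref{prop:length_invariance}(ii) as the structural explanation, per-task correlations with token count (Table~\ref{tab:length_corr_detail}), and the 10-bin length-matched protocol whose per-task values in Table~\ref{tab:main_results} average to $\cedMatchedAvg$, $\rauqMatchedAvg$ and $\entMatchedAvg$, with your bootstrap CIs and permutation tests as harmless extra checks. One caution on your Step~3 fallback: residual within-bin length differences would bias matched AUROC in the raw direction (longer OOD, higher score), so they could explain entropy on Jailbreak ($0.612$) but not the below-chance Spam values (CED $0.432$, entropy $0.448$); since the statement only claims task averages, you do not need that attribution.
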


\textbf{Length distribution differences.}  The confound is enabled by systematic length differences: OOD inputs are $\sim\!\lenRatioJB$ longer for Jailbreak, $\sim\!\lenRatioTC$ for ToxicChat, and $\sim\!\lenRatioSpam$ for Spam; only HateSpeech has near-identical lengths ($\lenRatioHS$).  A trivial \emph{length-only baseline} (raw token count, no model) achieves $\lenOnlyJail$ on Jailbreak and $\lenOnlySpam$ on Spam---comparable to CED and RAUQ's raw scores, underscoring confound severity.  WildGuard's~\citep{han2024wildguard} confidence scores show the same pattern: raw $\wildguardRaw$ $\to$ matched $\wildguardMatched$ ($|r| = \wildguardLenCorr$), demonstrating the confound extends to deployed systems' score calibration (Appendix~\ref{app:length_confound_detail}).

\section{Mechanistic Evidence}
\label{sec:circuits}

If different OOD types engage different model components, this should be visible in the processing trajectory.  We present two lines of evidence: a per-layer signal analysis and causal patching.  Full circuit analyses (9 findings, disruption profiles, head/MLP characterization) appear in Appendix~\ref{app:circuit_full}.

\begin{table}[t]
\caption{\textbf{L0-vs-Last layer $k$-NN AUROC} on Qwen3-0.6B.  Raw AUROC at layer~0 is heavily confounded by length ($|r| > 0.55$); length-matched evaluation reveals processing \emph{constructs} OOD signal for all tasks except HateSpeech.}
\label{tab:l0_vs_last}
\centering
\small
\begin{tabular}{lccccccc}
\toprule
\textbf{Task} & \textbf{L0 raw} & \textbf{Last raw} & \textbf{L0 match} & \textbf{Last match} & \textbf{$|r|$ L0} & \textbf{$|r|$ Last} & \textbf{$\Delta$ match} \\
\midrule
Jailbreak & 0.759 & 0.745 & 0.389 & 0.684 & 0.66 & 0.03 & +0.295 $\uparrow$ \\
ToxicChat & 0.651 & 0.692 & 0.467 & 0.659 & 0.66 & 0.01 & +0.192 $\uparrow$ \\
Spam & 0.680 & 0.861 & 0.442 & 0.939 & 0.55 & 0.30 & +0.497 $\uparrow$ \\
HateSpeech & 0.567 & 0.522 & 0.585 & 0.530 & 0.58 & 0.30 & -0.055 $\downarrow$ \\

\bottomrule
\end{tabular}
\end{table}

\textbf{Signal construction across layers.}  Table~\ref{tab:l0_vs_last} compares length-matched $k$-NN AUROC at layer~0 versus the last layer.  For Jailbreak, the embedding-level signal is negligible ($\lzeroMatchedJB$, below chance), but processing \emph{creates} a genuine signal by the last layer ($\llastMatchedJB$, $\Delta = \deltaMatchedJB$).  The raw layer-0 score ($\lzeroRawJB$) was almost entirely a length artifact ($|r| = \lzeroLenCorrJB$).  Spam shows the largest amplification ($\Delta = \deltaMatchedSpam$); ToxicChat is similarly amplified ($\Delta = \deltaMatchedTC$).  Only HateSpeech shows mild dilution ($\Delta = \deltaMatchedHS$).  For most tasks, OOD signal is \emph{constructed} by processing---explaining why trajectory features carry more signal than either endpoint.  When amplification is strong (Spam), $k$-NN suffices; when moderate (Jailbreak), the trajectory captures inter-layer dynamics that the endpoint misses.

\textbf{Causal validation via activation patching.}  We replace attention or MLP outputs from ID into OOD samples (\patchNpairs\ pairs per task) at each task's top-5 disruption layers.  Jailbreak shows attention dominance ($\patchJailRatio$, $p < 0.001$); HateSpeech shows MLP dominance ($\patchHateRatio$, $p = \patchHateP$); ToxicChat and Spam show consistent trends without reaching significance ($p = \patchToxicP$, $p = \patchSpamP$).

\textbf{Circuit characterization.}  Per-head patching on Jailbreak reveals a sparse circuit: \headConcentration\% of heads carry 50\% of the effect, concentrated at head-0 positions that shift attention toward directive verbs ($\headDirRatioMean$) and meta-linguistic references ($\headMetaRatioMean$; $p < 0.001$).  The MLP pathway for HateSpeech shows distributed encoding (\mlpSigFracMin--\mlpSigFracMax\% of neurons differentially activated, modest individual effects).  This dichotomy---localized attention for structural anomalies vs.\ distributed MLPs for content---provides partial mechanistic support for the two-pathway split (Appendices~\ref{app:head_function},~\ref{app:mlp_function}).

\textbf{Cross-model and cross-task replication.}  On Llama-3.2-1B, Jailbreak is \llamaJBAttn\% attention-dominant vs.\ HateSpeech \llamaHSAttn\%.  At the group level, adversarial tasks show \advMeanAttn\% attention-dominant layers vs.\ \semMeanAttn\% for semantic tasks ($p = \semAdvPval$).  On the additional evaluation tasks, AGNews is MLP-dominant (35.7\%) and JailbreakClassification is attention-dominant (57.1\%), consistent with the pattern.

\section{Related Work}
\label{sec:related}

\textbf{Attention-based OOD/UQ.}  CED~\citep{lee2024ced} and RAUQ~\citep{vazhentsev2025uncertainty} use attention-derived scores; we show these are structurally length-confounded.  Distance-based methods~\citep{lee2018simple,sun2022knn} succeed on topic classification but do not distinguish OOD types.

\textbf{Hidden-state trajectory methods.}  Wang et al.~\citep{wang2024trajectory} use a single volatility statistic without length control.  D$^2$HScore~\citep{ding2025d2hscore} and Chen et al.~\citep{chen2025icrprobe} track cross-layer dynamics for hallucination detection.  We implement D$^2$HScore and TV-Score under our length-matched protocol; both fall below Maha-Traj ($\trajAUROC$ vs.\ $\dtwohscoreAvg$ and $\tvscoreAvg$).  These works establish that layerwise dynamics carry useful signal; our contribution is to (a)~formally ground the approach in structural length-independence guarantees, (b)~organize the static-vs-dynamic split along a vocabulary-transparency axis that explains \emph{when} each signal dominates, and (c)~provide mechanistic evidence (L0-vs-Last signal construction and activation patching) for \emph{why} the split occurs.

\textbf{OOD taxonomies and safety.}  Arora et al.~\citep{arora2021types} establish that different OOD types need different strategies; we formalize this with mechanistic grounding.  Santilli et al.~\citep{santilli2025revisiting} show length biases in generation-based UQ; we extend this to classification-based detection.  LlamaGuard~\citep{inan2023llamaguard} achieves $>$0.90 with supervision; our framework is analytical, not competitive.

\textbf{Transformer circuits.}  The circuits framework~\citep{elhage2021mathematical,olsson2022context} views transformers as interpretable units.  Duan et al.~\citep{duan2024dissecting} compare attention and MLP roles; our circuit attribution extends this to OOD detection with task-type-dependent patterns.  Geva et al.~\citep{geva2021transformer} and Meng et al.~\citep{meng2022locating} establish that MLPs encode factual associations, consistent with our finding that content-shift tasks show higher MLP involvement.

\section{Discussion and Conclusion}
\label{sec:discussion}

The central finding is that several recent attention-based white-box OOD methods are structurally confounded by sequence length, collapsing to chance under length-matched evaluation.  The $\Theta(\log T)$ structural dependence (Proposition~\ref{prop:length_invariance}) means any OOD score derived from attention weights is at risk; we recommend length-matched evaluation as a standard control.  The two-pathway framework organizes what survives deconfounding along a vocabulary-transparency spectrum, validated on Qwen3-0.6B (primary) with cross-architecture support from \nModels\ models across 4 families ($\scaleRange$) and 6+ evaluation tasks.

Three evidence lines support the framework: a crossover between $k$-NN and Maha-Traj across 6 tasks; per-layer signal analysis showing that processing constructs OOD signal from near-chance embeddings; and circuit attribution with causal patching showing adversarial tasks engage attention circuits more than semantic tasks ($p = \semAdvPval$; Jailbreak $p < 0.001$).

\textbf{Limitations.}  (1)~Boundary tasks (ToxicChat $\toxicAUROC$, HateSpeech $\hateAUROC$) yield weak unsupervised signal; supervised LR ($\supervisedToxic$, $\supervisedHate$) shows the signal exists but requires labels.  Content-level safety involves \emph{compositional-semantic} anomalies---distributional methods detect unusual vocabulary or processing, but normative anomalies depend on alignment priors rather than statistical regularities.
(2)~Causal patching reaches significance for Jailbreak ($p < 0.001$) and HateSpeech ($p = \patchHateP$), but not ToxicChat or Spam; cross-model replication is partial.
(3)~Validated on \nModels\ models ($\scaleRange$); scaling to $>$10B is needed.  200 samples per class ($n_\text{eff}$ = \spamNeff--\hateNeff; all $p \leq 0.013$).
(4)~7 of \nTrajFeats{} features have formal length-independence guarantees at two levels: structural independence for Categories A and G (Proposition~\ref{prop:last_token}), quantitative bounds for Category D (Corollary~\ref{cor:second_moment}).  An optimal \nFormalFeats{}-feature subset of these achieves $\catFAblationNoF$ avg alone.  The remaining Category~F features are range-bounded [0,1] but structurally $T$-dependent ($|r| = 0.22$, controlled by length-matched protocol).  All primary analyses use Qwen3-0.6B; cross-architecture results (\nModels\ models, $\scaleRange$) appear in Appendix~\ref{app:multi_model}.

\textbf{Implications.}  (1)~Work using attention-derived OOD scores should control for sequence length.  (2)~Different OOD types engage different model components; principled analysis requires task-dependent pathway structure.  (3)~An optimal \nFormalFeats{}-feature formally guaranteed subset already outperforms all attention baselines ($\catFAblationNoF$ avg); adding Category~F improves adversarial detection ($\trajAUROC$ avg).

\section*{Impact Statement}

This paper reveals that published attention-based OOD methods and the confidence scores of a deployed safety filter (WildGuard) are confounded by sequence length, and proposes a two-pathway framework with mechanistic grounding.  The safety warning applies specifically to \emph{ranking-based confidence scores} derived from attention weights; it does not evaluate WildGuard's binary classification accuracy or the performance of production moderation systems that may use additional signals beyond attention-derived confidence.  Adversaries could potentially use our analysis to craft evasive inputs; the approach requires model internals access.

\section*{LLM Disclosure}
AI coding assistants (GitHub Copilot) were used for experiment implementation and table generation scripts.  An LLM was used to assist with drafting and editing the manuscript text.  All research ideas, experimental design, analysis, and scientific conclusions are the authors' own.  All reported numbers are computed from experiment outputs, not generated by an LLM.

\bibliography{references}
\bibliographystyle{colm2026_conference}

\newpage
\appendix

\section{Formal Statements and Proofs}
\label{app:proofs}

In a pre-norm transformer with $L$ layers, the residual stream at layer $l$ and token position $t$ updates as:
\begin{align}
  h_l'^{(t)} &= h_l^{(t)} + \text{Attn}_l\!\bigl(\text{LN}(h_l^{(1:t)})\bigr)^{(t)} \\
  h_{l+1}^{(t)} &= h_l'^{(t)} + \text{MLP}_l\!\bigl(\text{LN}(h_l'^{(t)})\bigr)
  \label{eq:residual}
\end{align}
where $h_l^{(t)} \in \R^{d}$ ($d \equiv d_\text{model}$), and $h_l^{(1:t)}$ denotes that attention at position $t$ depends on all positions $1, \ldots, t$ under causal masking.

\begin{definition}[Structural vs.\ Statistical Length Dependence]
\label{def:length_dependence}
A family of statistics $\{s_T\}_{T \geq 1}$ has \textbf{structural} length dependence if the mathematical operation defining $s_T$ changes with $T$ (e.g., the summation range, the input domain, or the normalization constant depends on $T$).  It has only \textbf{statistical} length dependence if the functional form is fixed---any observed correlation with $T$ arises from finite-sample averaging, not from the definition of the statistic.
\end{definition}

\begin{proposition}[Length Dependence of OOD Statistics]
\label{prop:length_invariance}
Let a transformer with $L$ layers process an input of length $T$, producing hidden states $h_l^{(t)} \in \R^{d}$ with $\|h_l^{(t)}\| \leq B$ for all $l, t$.\footnote{On Qwen3-0.6B, empirical max $\|h_l^{(t)}\| \approx \hiddenNormMax$ (outlier-driven; mean $\approx \hiddenNormMean$).  The raw bound $4KB/(T+1)$ is thus conservative, but the primary trajectory features (cosine similarities) are intrinsically bounded in $[-1,1]$ regardless of $B$.  The practical claim rests on the empirical $|r| < 0.25$.}

\begin{enumerate}[label=(\roman*), leftmargin=*, itemsep=4pt]
  \item \textbf{Trajectory statistics have only statistical length dependence.}  Define the mean-pooled state $\bar{h}_l = \frac{1}{T}\sum_{t=1}^{T} h_l^{(t)} \in \R^{d}$ and let $f: \R^{d} \times \R^{d} \to \R$ be $K$-Lipschitz.  The trajectory $\tau = (f(\bar{h}_0, \bar{h}_1), \ldots, f(\bar{h}_{L-1}, \bar{h}_L)) \in \R^L$ has fixed dimensionality $L$ for all $T$.

  Moreover, under causal masking, appending one token to an input of length $T$ does not alter existing hidden states $h_l^{(1)}, \ldots, h_l^{(T)}$.  Each trajectory coordinate changes by at most:
  \begin{equation}
    |\tau_l' - \tau_l| \leq \frac{4KB}{T+1}
  \end{equation}
  This bound is deterministic and requires no independence assumption on token representations.

  \item \textbf{Attention statistics have structural length dependence.}  Under causal masking, the attention distribution at position $t$ lies in $\Delta^t$ (the $t$-simplex).  The maximum entropy $H_{\max}(\alpha_t) = \log t$ grows with $t$, and any sequence-level aggregate inherits this:
  \begin{equation}
    \bar{H}_{\max} = \frac{1}{T}\sum_{t=1}^{T} \log t = \log T - 1 + O\!\left(\frac{\log T}{T}\right)
  \end{equation}
  by Stirling's approximation ($\log T! = T\log T - T + O(\log T)$).  The range $[0, \bar{H}_{\max}]$ grows as $\Theta(\log T)$: inputs of different lengths $T_1 \neq T_2$ produce attention statistics from structurally different domains $\prod_{t=1}^{T_1}\!\Delta^t$ vs.\ $\prod_{t=1}^{T_2}\!\Delta^t$.
\end{enumerate}
\end{proposition}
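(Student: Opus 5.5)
Part~(i) splits into three claims: fixed dimensionality, causal stability of existing states, and the perturbation bound.

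\textbf{Fixed dimensionality.} This is immediate. Each $\bar{h}_l$ lies in $\R^d$ for every $T$, and $f$ has the fixed domain $\R^d\times\R^d$. Hence $\tau\in\R^L$ is defined by the same operation for every $T$, which is statistical dependence in the sense of Definition~\ref{def:length_dependence}.

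\textbf{Causal stability.} I will argue by induction on $l$ using Eq.~\ref{eq:residual}. The embedding $h_0^{(t)}$ depends only on token $t$ and its position. If $h_l^{(1:t)}$ is unchanged by appending token $T{+}1$, then $\text{Attn}_l(\text{LN}(h_l^{(1:t)}))^{(t)}$ is unchanged, because causal masking means position $t$ reads only positions $\le t$. LayerNorm and the MLP act tokenwise, so they preserve this. Therefore $h_{l+1}^{(t)}$ is unchanged for all $t\le T$.

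\textbf{Perturbation bound.} Write $\bar h_l'$ for the mean over $T{+}1$ tokens. Then
\[
\bar h_l'-\bar h_l=\frac{1}{T+1}\bigl(h_l^{(T+1)}-\bar h_l\bigr),
\]
and by causal stability the old states entering $\bar h_l$ are the same. Since $\|h_l^{(T+1)}\|\le B$ and $\|\bar h_l\|\le B$ (triangle inequality), this gives $\|\bar h_l'-\bar h_l\|\le 2B/(T+1)$. I will take Lipschitz to mean $|f(x,y)-f(x',y')|\le K(\|x-x'\|+\|y-y'\|)$. Applying this to $(\bar h_l,\bar h_{l+1})$ gives $|\tau_l'-\tau_l|\le K\cdot 2\cdot 2B/(T+1)=4KB/(T+1)$. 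The argument uses only deterministic norm bounds, with no independence assumptions.

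\textbf{Part (ii).} Under causal masking, the softmax at position $t$ has support $\{1,\dots,t\}$, so $\alpha_t\in\Delta^t$. By Jensen's inequality (or Gibbs' inequality against the uniform distribution), $H(\alpha_t)\le\log t$, with equality at the uniform distribution. The maximum is attained because uniform attention is realizable, for example with zero query--key scores. Averaging gives $\bar H_{\max}=\frac1T\log T!$. Stirling's formula $\log T!=T\log T-T+O(\log T)$ then yields $\log T-1+O(\log T/T)$, which is $\Theta(\log T)$. The domain statement follows because the feasible set is the product $\prod_{t=1}^T\Delta^t$, which changes with $T$; this is structural dependence per Definition~\ref{def:length_dependence}.

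\textbf{Main obstacle.} The delicate step is pinning the constant $4$. It depends on the chosen norm convention for the Lipschitz constant on the product space; under a max-norm convention the constant would be $2$. I will state the sum convention explicitly, since any stronger convention only improves the bound. A second subtlety is that $\bar h_{l+1}$ also changes, which is why both arguments of $f$ contribute to the bound.
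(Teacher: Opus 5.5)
Your proposal is correct and follows the paper's proof essentially step for step. It uses the same mean-shift identity $\bar h_l'-\bar h_l=(h_l^{(T+1)}-\bar h_l)/(T+1)$ with the $2B/(T+1)$ bound, the same sum-convention Lipschitz step over both arguments of $f$ giving $4KB/(T+1)$, and the same entropy bound plus Stirling computation of $\frac{1}{T}\log T!$ for part~(ii). Your additions are sound elaborations of steps the paper only asserts: the induction establishing causal stability, the realizability of uniform attention via zero query--key scores, and the observation that max- or Euclidean-product-norm Lipschitz conditions imply the sum-norm one with the same $K$.
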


\begin{corollary}[Second-Moment Extension]
\label{cor:second_moment}
Under the same conditions as Proposition~\ref{prop:length_invariance}(i), define the empirical covariance $\Sigma_l = \frac{1}{T}\sum_{t=1}^{T}(h_l^{(t)} - \bar{h}_l)(h_l^{(t)} - \bar{h}_l)^\top \in \R^{d \times d}$.  Under causal masking, appending one token changes $\Sigma_l$ by at most:
\begin{equation}
  \|\Sigma_l' - \Sigma_l\|_F \leq \frac{6B^2}{T+1}
\end{equation}
Therefore any $K$-Lipschitz function of $(\Sigma_0, \ldots, \Sigma_L)$---including spread ratios, rank ratios, and effective dimensionality (Category~D)---has only statistical length dependence, with sensitivity $O(KB^2/(T{+}1))$.
\end{corollary}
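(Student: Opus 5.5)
The plan is to avoid working with the centered sum directly and use the decomposition $\Sigma_l = M_l - \bar{h}_l\bar{h}_l^\top$, where $M_l = \frac{1}{T}\sum_{t=1}^{T} h_l^{(t)} h_l^{(t)\top}$ is the uncentered second moment. As in the proof of Proposition~\ref{prop:length_invariance}(i), causal masking means the first $T$ hidden states are unchanged when token $T{+}1$ is appended. So only the running averages change, and the effect of one new state $h \equiv h_l^{(T+1)}$ with $\|h\|\le B$ can be isolated. Throughout I will use the identity $\|uv^\top\|_F = \|u\|\,\|v\|$ for rank-one matrices.

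\textbf{Step 1 (second moment).} The update is
\[
M_l' - M_l = \frac{h h^\top - M_l}{T+1}.
\]
Here $\|hh^\top\|_F \le B^2$, and by convexity $\|M_l\|_F \le \frac{1}{T}\sum_t \|h_l^{(t)}\|^2 \le B^2$. This gives $\|M_l' - M_l\|_F \le 2B^2/(T+1)$.

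\textbf{Step 2 (mean).} The same running-average identity gives
\[
\bar{h}_l' - \bar{h}_l = \frac{h - \bar{h}_l}{T+1},
\]
so $\|\bar{h}_l' - \bar{h}_l\| \le 2B/(T+1)$. This is the estimate already used in Proposition~\ref{prop:length_invariance}(i).

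\textbf{Step 3 (outer product of the mean).} Split the difference as
\[
\bar{h}_l'\bar{h}_l'^\top - \bar{h}_l\bar{h}_l^\top = (\bar{h}_l' - \bar{h}_l)\bar{h}_l'^\top + \bar{h}_l(\bar{h}_l' - \bar{h}_l)^\top .
\]
Both $\|\bar{h}_l\|$ and $\|\bar{h}_l'\|$ are at most $B$. Each term therefore has Frobenius norm at most $2B^2/(T+1)$, and the sum is at most $4B^2/(T+1)$.

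\textbf{Combining.} The triangle inequality on Steps~1 and~3 gives $\|\Sigma_l' - \Sigma_l\|_F \le 6B^2/(T+1)$, which is the stated constant.

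\textbf{Lipschitz transfer.} Suppose $g$ is $K$-Lipschitz on $(\Sigma_0,\ldots,\Sigma_L)$ with respect to the maximum over $l$ of the Frobenius norms. Then $|g' - g| \le 6KB^2/(T+1)$. If instead one uses the $\ell_2$ product norm, there is an extra $\sqrt{L+1}$ factor, which is still $O(KB^2/(T+1))$ for a fixed architecture. Structurally, each $\Sigma_l$ lives in the fixed space $\R^{d\times d}$ and its defining formula is the same for every $T$. This gives statistical rather than structural dependence in the sense of Definition~\ref{def:length_dependence}.

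\textbf{Main obstacle.} The algebra is routine; the difficult part is the final ``therefore''. Spread ratios are Lipschitz only on regions where the denominator is bounded away from zero. Effective dimensionality $(\operatorname{tr}\Sigma)^2/\|\Sigma\|_F^2$ is Lipschitz only when $\|\Sigma\|_F$ is bounded below. Rank itself is discontinuous and is not Lipschitz at all. My plan is to state the corollary conditionally: restrict to a domain $\{\|\Sigma_l\|_F \ge \sigma_{\min}\}$ on which the Category~D maps are $K$-Lipschitz with an explicit $K$ depending on $\sigma_{\min}$ and $B$. For ``rank ratios'' I would substitute a smooth surrogate such as the entropy-based effective rank. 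The claim about exact rank should be explicitly excluded rather than proved.
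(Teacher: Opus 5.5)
Your proposal is correct and follows the paper's proof: the split $\Sigma_l = M_l - \bar h_l\bar h_l^\top$, the $2B^2/(T+1)$ bound on the second-moment update, the $4B^2/(T+1)$ bound on the mean outer product via $\|\bar h_l'-\bar h_l\|\le 2B/(T+1)$, and the triangle inequality (you also write out the rank-one splitting that the paper leaves implicit). Your caveat on the final ``therefore'' is a valid sharpening the paper omits, since it simply assumes the Category~D maps are $K$-Lipschitz, whereas exact rank is discontinuous and ratio-type quantities are Lipschitz only where their denominators stay bounded away from zero.
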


\begin{proof}[Proof of Proposition~\ref{prop:length_invariance}]
\textbf{Part (i).}  The output $\tau \in \R^L$ has $L$ coordinates by construction.  For the sensitivity bound: under causal masking, appending token $T{+}1$ leaves $h_l^{(1)}, \ldots, h_l^{(T)}$ unchanged (each depends only on prior positions).  The mean shifts to $\bar{h}_l' = \frac{T\bar{h}_l + h_l^{(T+1)}}{T+1}$, giving:
\[
  \|\bar{h}_l' - \bar{h}_l\| = \frac{\|h_l^{(T+1)} - \bar{h}_l\|}{T+1} \leq \frac{2B}{T+1}
\]
since $\|h_l^{(T+1)}\| \leq B$ and $\|\bar{h}_l\| \leq B$ (convexity of the ball).  Applying the Lipschitz condition to both arguments:
\[
  |\tau_l' - \tau_l| = |f(\bar{h}_l', \bar{h}_{l+1}') - f(\bar{h}_l, \bar{h}_{l+1})| \leq K\bigl(\|\bar{h}_l' - \bar{h}_l\| + \|\bar{h}_{l+1}' - \bar{h}_{l+1}\|\bigr) \leq \frac{4KB}{T+1}.\qquad\square_{\text{(i)}}
\]

\textbf{Part (ii).}  At position $t$, $\alpha_t \in \Delta^t$ has entropy $H(\alpha_t) \in [0, \log t]$, with $\log t$ achieved iff $\alpha_t$ is uniform over $t$ keys.  Summing the upper bounds: $\sum_{t=1}^{T} \log t = \log(T!)$.  By Stirling, $\log(T!) = T\log T - T + \tfrac{1}{2}\log(2\pi T) + O(T^{-1})$, yielding $\bar{H}_{\max} = \log T - 1 + O(\log T / T)$.  The lower bound is 0 (concentrated attention at every position), so the range of $\bar{H}$ grows as $\Theta(\log T)$.  CED~\citep{lee2024ced} aggregates attention-weighted representations over $t$ keys at position $t$; RAUQ~\citep{vazhentsev2025uncertainty} normalizes by $\log n$.  Both define computations whose input domain $\Delta^1 \times \cdots \times \Delta^T$ is structurally $T$-dependent.$\qquad\square_{\text{(ii)}}$
\end{proof}

\begin{proof}[Proof of Corollary~\ref{cor:second_moment}]
Decompose $\Sigma_l = M_l - \bar{h}_l\bar{h}_l^\top$ where $M_l = \frac{1}{T}\sum_t h_l^{(t)}h_l^{(t)\top}$.  Under causal masking, $M_l' = \frac{TM_l + h_l^{(T+1)}h_l^{(T+1)\top}}{T+1}$, so $\|M_l' - M_l\|_F \leq \frac{2B^2}{T+1}$.  For the mean outer product: $\|\bar{h}_l'\bar{h}_l'^\top - \bar{h}_l\bar{h}_l^\top\|_F \leq \frac{4B^2}{T+1}$ (using $\|\bar{h}_l' - \bar{h}_l\| \leq \frac{2B}{T+1}$ from Proposition~\ref{prop:length_invariance}).  Triangle inequality gives the result.
\end{proof}

\begin{proposition}[Last-Token Trajectory Features]
\label{prop:last_token}
Under the same setting as Proposition~\ref{prop:length_invariance}, let $h_l^{(T)} \in \R^{d}$ denote the hidden state at the last position $T$ of layer~$l$, and define the \emph{last-token trajectory} $\mathbf{h}^{(T)} = (h_0^{(T)}, \ldots, h_L^{(T)}) \in (\R^d)^{L+1}$.

\begin{enumerate}[label=(\roman*), leftmargin=*, itemsep=4pt]
  \item \textbf{Structural independence.}  Any statistic $s = g(\mathbf{h}^{(T)})$ where $g: (\R^d)^{L+1} \to \R^k$ is a fixed function whose definition does not reference $T$ has only \emph{statistical} length dependence in the sense of Definition~\ref{def:length_dependence}.  The domain~$(\R^d)^{L+1}$ has fixed dimensionality $(L{+}1)d$ for all~$T$.

  \item \textbf{Intrinsic boundedness.}  Cosine-similarity features $c_l = \cos(h_l^{(T)},\, h_{l+1}^{(T)})$ lie in $[-1,1]$ for all~$T$, with no dependence on $B$.  Update norms $\|\Delta_l\| = \|h_{l+1}^{(T)} - h_l^{(T)}\|$ are bounded by $2B$, and curvatures $\kappa_l = \cos(\Delta_l,\, \Delta_{l+1}) \in [-1,1]$.  Consequently, any $K$-Lipschitz aggregate over the $L$-dimensional vectors $(c_0, \ldots, c_{L-1})$, $(\|\Delta_0\|, \ldots, \|\Delta_{L-1}\|)$, or $(\kappa_0, \ldots, \kappa_{L-2})$ lies in a fixed compact set for all~$T$.
\end{enumerate}

This covers Categories~A (hidden-state dynamics) and G (layer-specific transitions).
\end{proposition}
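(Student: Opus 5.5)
The argument splits into the two parts of Proposition~\ref{prop:last_token}. Both reduce to checking definitions against Definition~\ref{def:length_dependence} and applying elementary inequalities.

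For part~(i), the approach is to check directly that none of the three criteria in Definition~\ref{def:length_dependence} is triggered.
\begin{itemize}[leftmargin=*, itemsep=0pt]
  \item \emph{Summation range.} The map $g$ is a single fixed function on $(\R^d)^{L+1}$. It contains no sum over positions, so no summation range can vary with $T$.
  \item \emph{Input domain.} The argument $\mathbf{h}^{(T)}$ always lies in $(\R^d)^{L+1}$, a space of dimension $(L{+}1)d$ that is the same for every $T$. This contrasts with the domain $\prod_{t=1}^{T}\Delta^t$ in Proposition~\ref{prop:length_invariance}(ii).
  \item \emph{Normalization.} By hypothesis the definition of $g$ does not reference $T$, so there is no $T$-dependent normalizing constant.
\end{itemize}
Hence the family $\{s_T\}$ equals $\{g \circ \pi_T\}$, where $\pi_T$ extracts the last-position states. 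Any dependence on $T$ enters only through the distribution of $\mathbf{h}^{(T)}$, which varies with the data. That is statistical dependence in the sense of Definition~\ref{def:length_dependence}.

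One subtlety needs a remark. $h_l^{(T)}$ itself depends on all $T$ positions through causal attention, as in Eq.~\ref{eq:residual}. The definition, however, is about the \emph{operation} $g$ applied to its input, not about how that input was generated. So this dependence counts as statistical and not structural.

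For part~(ii), the plan is to verify three bounds and then one compactness fact.
\begin{itemize}[leftmargin=*, itemsep=0pt]
  \item \emph{Cosine similarities.} By Cauchy--Schwarz, $|\langle u,v\rangle| \le \|u\|\|v\|$, so $c_l \in [-1,1]$. This holds for every $T$ and uses no bound on $B$.
  \item \emph{Zero vectors.} Whenever a cosine is written, I will adopt a convention for zero arguments, for example defining $\cos(u,0)=0$. The value then stays in $[-1,1]$.
  \item \emph{Update norms.} The triangle inequality together with $\|h_l^{(T)}\|\le B$ gives $\|\Delta_l\|\le 2B$.
  \item \emph{Curvatures.} Each $\kappa_l$ is again a cosine, so $\kappa_l \in [-1,1]$ by the same argument, with the same zero-vector convention.
\end{itemize}
The three vectors therefore lie in the fixed compact sets $[-1,1]^{L}$, $[0,2B]^{L}$ and $[-1,1]^{L-1}$ respectively. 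A $K$-Lipschitz aggregate $\phi$ is continuous, so it maps a compact set to a compact set. Concretely, $|\phi(x)-\phi(x_0)|\le K\|x-x_0\|$ bounds $\phi$ by $|\phi(x_0)|$ plus $K$ times the diameter of the cube. Since none of these sets depends on $T$, the aggregate lies in a fixed compact set for all $T$.

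Finally, I would connect the result to the feature categories. Each Category~A and Category~G feature (mean, minimum, churn ratio, and the fixed-depth cosine) is a fixed function of $\mathbf{h}^{(T)}$, so part~(i) applies to all of them.

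The main obstacle is the churn ratio. A ratio is not globally Lipschitz, so part~(ii) does not apply directly. I would handle it in one of two ways: restrict the denominator away from zero, or note that part~(i) still gives structural independence without needing boundedness.
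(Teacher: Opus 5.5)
Your proposal is correct and takes essentially the same route as the paper. In part (i) you use the same observation that $g$ is a fixed map on the $T$-independent domain $(\R^d)^{L+1}$. In part (ii) you use the same Cauchy--Schwarz and triangle-inequality bounds followed by Lipschitz control on a fixed cube; your anchored bound $|\phi(x_0)| + K\cdot\mathrm{diam}$ is slightly more careful than the paper's ``range bounded by $K\cdot 2\sqrt{L}$,'' which only controls the oscillation. On the churn worry, the appendix defines \texttt{hs\_churn\_ratio} and \texttt{hs\_churn\_late} as fractions of layers with $c_l$ below a median, so there is no vanishing denominator: they lie trivially in $[0,1]$ but are not Lipschitz, and your fallback to part (i) is the right way to cover them.
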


\begin{proof}
\textbf{Part (i).}  Under causal masking, $h_l^{(T)} \in \R^d$ for every $l$ and every $T \geq 1$.  The function $g$ maps from the fixed domain $(\R^d)^{L+1}$ to $\R^k$; its definition does not change when $T$ changes.  By Definition~\ref{def:length_dependence}, the statistic has only statistical length dependence.

\textbf{Part (ii).}  For nonzero vectors $a, b \in \R^d$, $\cos(a,b) = \frac{a^\top b}{\|a\|\|b\|} \in [-1,1]$.  Since $\cos$ maps to a bounded interval independent of~$B$ and~$T$, the trajectory $(c_0, \ldots, c_{L-1}) \in [-1,1]^L$ for all inputs.  For norms: $\|h_{l+1}^{(T)} - h_l^{(T)}\| \leq \|h_{l+1}^{(T)}\| + \|h_l^{(T)}\| \leq 2B$ by triangle inequality.  A $K$-Lipschitz function on $[-1,1]^L$ (or $[0,2B]^L$) has range bounded by $K \cdot 2\sqrt{L}$ (or $K \cdot 2B\sqrt{L}$), independent of $T$.
\end{proof}

\begin{remark}[Category~F---Component Decomposition]
\label{rem:category_f}
The component ratio $r_l = \|\text{Attn}_l^{(T)}\| / (\|\text{Attn}_l^{(T)}\| + \|\text{MLP}_l^{(T)}\|)$ always lies in $[0,1]$.  The MLP output depends only on the $d$-dimensional post-attention state (structurally length-independent).  The attention output $\text{Attn}_l^{(T)} = \sum_{j=1}^{T}\alpha_{T,j}\, V_l\,\text{LN}(h_l^{(j)})$ is a \emph{convex combination} over $T$ key--value pairs, so its computation is structurally $T$-dependent.  However, as a convex combination of bounded vectors, $\|\text{Attn}_l^{(T)}\| \leq \|V_l\|\, B'$ (where $B'$ bounds post-LayerNorm representations), and therefore $r_l \in [0,1]$ regardless of~$T$.  Category~F features thus occupy an intermediate position: structurally $T$-dependent in computation (like attention entropy) but \emph{range-bounded} (unlike attention entropy, whose domain grows as $\Theta(\log T)$).  This places F~strictly between Categories~A--G (fully length-independent) and attention-derived metrics (fully length-dependent).
\end{remark}

\section{Reproducibility}
\label{app:reproducibility}

Frozen random seed (42), SHA256 checksums for data splits, strict train/val/test separation with deduplication.  CUBLAS\_WORKSPACE\_CONFIG=:4096:8 for CUDA determinism.  All experiments on a single RTX 3090 (24GB).  Code release upon publication.

All results use \textbf{length-matched AUROC}: 10 length bins, within-bin subsampling with seed 42, residualized logistic regression scores.  Raw AUROC values are reported only for comparison with prior work.

\subsection{Dataset Details}

\begin{table}[H]
\caption{\textbf{Dataset construction.}  Per-task source datasets, class definitions, and split sizes after deduplication.  Sizes reflect data availability constraints; all splits are disjoint and checksummed.}
\label{tab:dataset_details}
\centering
\scriptsize
\setlength{\tabcolsep}{2pt}
\begin{tabular}{llllccccc}
\toprule
\textbf{Task} & \textbf{Source Dataset} & \textbf{ID Class} & \textbf{OOD Class} & \textbf{ID Train} & \textbf{ID Val} & \textbf{ID Test} & \textbf{OOD Val} & \textbf{OOD Test} \\
\midrule
ToxicChat & lmsys/toxic-chat (0124) & Non-toxic & Toxic & 250 & 50 & 500 & 50 & 300 \\
Jailbreak & jackhhao/jailbreak-classification & Benign & Jailbreak & 200 & 50 & 250 & 50 & 400 \\
HateSpeech & tweets\_hate\_speech\_detection & Normal & Hate speech & 250 & 50 & 500 & 50 & 500 \\
Spam & ucirvine/sms\_spam & Ham & Spam & 250 & 50 & 500 & 50 & 500 \\
\midrule
\multicolumn{9}{l}{\emph{Additional evaluation tasks}} \\
\midrule
AGNews & fancyzhx/ag\_news & World & Sci/Tech & 250 & 50 & 500 & 50 & 500 \\
JailbreakClass. & rubend18/chatgpt\_jailbreak & Normal & Jailbreak & 200 & 50 & 250 & 50 & 400 \\
\bottomrule
\end{tabular}%
\end{table}

All datasets are loaded via HuggingFace \texttt{datasets}.  Raw text fields are used without additional preprocessing.  No prompt templates are applied---the model processes raw text inputs directly.  The evaluation pipeline (length-matched AUROC, bootstrap confidence intervals, permutation tests) is implemented in \texttt{src/experiments/} and the table generation scripts in \texttt{paper/generate\_tables.py}; exact reproduction requires running \texttt{make full} from the paper directory.  The complete evaluation pipeline---including length-matched AUROC computation, bootstrap confidence intervals (2K resamples), permutation tests (10K permutations), and split generation with SHA256 checksums---is provided in the accompanying code repository.

\section{Length Confound: Detailed Analysis}
\label{app:length_confound_detail}

\subsection{Per-Task Length Correlations}

\begin{table}[H]
\caption{Per-task length correlations ($|r|$) for all metrics.}
\label{tab:length_corr_detail}
\centering
\small
\begin{tabular}{lccccc}
\toprule
\textbf{Task} & \textbf{Len.\ Ratio} & \textbf{CED} & \textbf{RAUQ} & \textbf{Entropy} & \textbf{Norm.\ Ent.} \\
\midrule
ToxicChat & 2.5$\times$ & 0.89 & 0.66 & 0.85 & 0.60 \\
Jailbreak & 3.2$\times$ & 0.87 & 0.62 & 0.91 & 0.69 \\
HateSpeech & 1.0$\times$ & 0.97 & 0.84 & 0.84 & 0.15 \\
Spam & 2.4$\times$ & 0.97 & 0.91 & 0.90 & 0.38 \\
\bottomrule
\end{tabular}
\end{table}

\section{Processing Trajectory Features}
\label{app:trajectory_features}

\subsection{Feature Taxonomy}

\begin{table}[H]
\caption{\textbf{Processing trajectory feature taxonomy.} \nTrajFeats{} features across \nCategories{} categories.  Category~D: quantitative bound (Corollary~\ref{cor:second_moment}).  Categories~A, G: structural independence + intrinsic boundedness (Proposition~\ref{prop:last_token}).  Category~F: bounded range, intermediate status (Remark~\ref{rem:category_f}).  ``Cat.\ LR'' = category-only logistic regression matched AUROC (avg over 4 tasks).}
\label{tab:feature_taxonomy}
\centering
\small
\begin{tabular}{clccc}
\toprule
\textbf{Cat.} & \textbf{What it Measures} & \textbf{$n$} & \textbf{Cat.\ LR} & \textbf{Best Indiv.} \\
\midrule
A & Hidden state dynamics & 4 & \textbf{0.705} & \textbf{0.621} \\
D & Representation geometry & 2 & 0.669 & 0.599 \\
F & Component decomposition & 2 & 0.690 & 0.578 \\
G & Layer-specific transitions & 1 & 0.668 & 0.577 \\
\midrule
\multicolumn{2}{l}{\textbf{All trajectory features (A, D, F, G)}} & \textbf{9} & \textbf{0.797} & --- \\

\bottomrule
\end{tabular}%
\end{table}

\paragraph{Category A: Hidden State Dynamics (4 features).}  Cosine similarity between consecutive hidden states $\cos(h_l, h_{l+1})$ across layers: \texttt{hs\_cos\_mean} (average smoothness), \texttt{hs\_cos\_min} (maximum disruption), \texttt{hs\_churn\_late} (late-layer disruption fraction), and \texttt{hs\_churn\_ratio} (early-to-late disruption balance).

\paragraph{Category D: Representation Geometry (2 features).}  How the token representation cloud changes shape across layers: \texttt{repr\_spread\_slope} (rate of change) and \texttt{repr\_spread\_std} (variability).

\paragraph{Category F: Component Decomposition (2 features).}  Attention vs.\ MLP contribution balance: \texttt{comp\_align\_mean} (attention/MLP alignment) and \texttt{comp\_attn\_ratio\_shift} (change in attention dominance across layers).

\paragraph{Category G: Layer-Specific Transitions (1 feature).}  Cosine similarity at a fixed early layer: \texttt{hs\_cos\_at\_p17} (17\% depth probe).

\subsection{Mathematical Definitions}

\paragraph{Category A.}
For the last token position across $L$ layers, compute cosine similarities $c_l = \cos(h_l, h_{l+1})$ for $l = 0, \ldots, L-1$.  Retained features:
\texttt{hs\_cos\_mean} ($\bar{c} = \frac{1}{L} \sum_l c_l$),
\texttt{hs\_cos\_min} ($\min_l c_l$),
\texttt{hs\_churn\_ratio} (fraction with $c_l < \text{median}$),
\texttt{hs\_churn\_late} (same, restricted to later half).

\paragraph{Category D.}
\texttt{repr\_spread\_slope}: linear slope of token-cloud spread across layers; \texttt{repr\_spread\_std}: standard deviation of spread.

\paragraph{Category F.}
\texttt{comp\_align\_mean}: mean cosine alignment between attention and MLP outputs; \texttt{comp\_attn\_ratio\_shift}: change in $\|\text{Attn}_l\| / (\|\text{Attn}_l\| + \|\text{MLP}_l\|)$ from early to late layers.

\paragraph{Category G.}
\texttt{hs\_cos\_at\_p17}: cosine similarity at 17\% of model depth (early-layer probe).

\section{Feature Selection Analysis}
\label{app:feature_selection}

\subsection{Which Features Matter Most?}

Individual feature analysis reveals a tiered importance structure.  Of the \nTrajFeats{} features, all contribute meaningfully to the composite score (Table~\ref{tab:feature_taxonomy}), with hidden-state dynamics (A) providing the most individually strong features.  At the category level, hidden state dynamics (A) and component decomposition (F) achieve the highest category-only LR scores.  All categories exhibit low composite length correlation ($|r| < 0.25$), with formal guarantees at two levels: quantitative sensitivity for Category~D (Corollary~\ref{cor:second_moment}) and structural independence with intrinsic boundedness for Categories~A and~G (Proposition~\ref{prop:last_token}); Category~F has bounded range (Remark~\ref{rem:category_f}).  The 9-feature set was selected via greedy backward elimination from an initial 21-feature pool (Categories A, D, F, G after removing B, C, E), pruning features whose removal improved or negligibly affected Mahalanobis detection AUROC.  A 5-feature subset achieves comparable average performance ($0.720$ vs.\\ $\\trajAUROC$) but transfers less stably across model families.

\subsection{Supervised Feature Analysis (Forward Selection)}

\begin{table}[H]
\caption{\textbf{Supervised feature analysis (greedy forward selection).}  \forwardKSeven{} features from all \nCategories{} categories capture \forwardKSevenPct\% of the supervised signal; the remaining features add diminishing returns.  The supervised upper bound ($\supervisedAUROC$) confirms the features are information-rich.}
\label{tab:feature_selection}
\centering
\small
\begin{tabular}{cccccccr}
\toprule
\textbf{$k$} & \textbf{Cats.} & \textbf{Toxic} & \textbf{Jail} & \textbf{Hate} & \textbf{Spam} & \textbf{Avg} & \textbf{\% Full} \\
\midrule
1 & 1 & 0.517 & 0.872 & 0.449 & 0.722 & 0.640 & 80.3\% \\
3 & 3 & 0.557 & 0.910 & 0.596 & 0.843 & 0.726 & 91.2\% \\
5 & 3 & 0.587 & 0.916 & 0.614 & 0.866 & 0.746 & 93.6\% \\
7 & 5 & 0.584 & 0.930 & 0.703 & 0.841 & 0.764 & 95.9\% \\
10 & 6 & 0.639 & 0.943 & 0.713 & 0.845 & 0.785 & 98.5\% \\
16 & 7 & 0.655 & 0.953 & 0.723 & 0.855 & \textbf{0.797} & 100.0\% \\

\bottomrule
\end{tabular}%
\end{table}

\begin{figure}[H]
\centering
\includegraphics[width=\columnwidth]{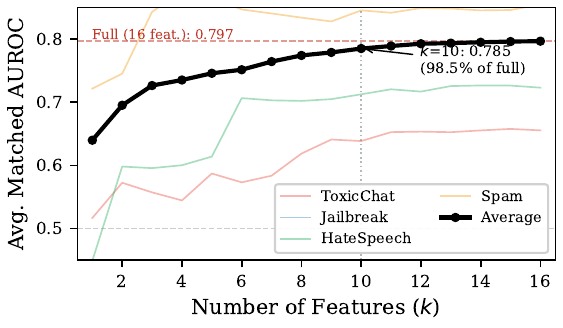}
\caption{\textbf{Feature selection curve (supervised).}  Forward-selected AUROC saturates rapidly: \forwardKSeven{} features reach \forwardKSevenPct\% of full performance.}
\label{fig:feature_selection}
\end{figure}

\subsection{Task--Feature Interaction}
\label{app:task_feature_interaction}

Different OOD types engage different trajectory aspects.  Table~\ref{tab:task_feature_interaction} groups the \nTrajFeats{} features into \nSemanticGroups{} semantic groups and reports group-averaged matched AUROC per task.  \nSpecialist{}/\nSemanticGroups{} groups are \emph{specialists} (cross-task spread $> 0.15$), while \nUniversal{} (\universalGroup{}, disruption count) are \emph{universal} with consistent signal across all tasks.  The strongest specialist is \specialistGroup{} (spread $= \specialistSpread$), which peaks at $\cloudBestAUROC$ on Jailbreak and drops sharply on ToxicChat---consistent with Jailbreak's attention-dominant circuit disrupting the token-cloud geometry.  Curvature features are also strongest on jailbreak-like structure, while smoothness and depth probes specialize on Spam ($\smoothBestAUROC$, $\depthBestAUROC$).

\begin{table}[H]
\caption{\textbf{Task--feature interaction.}  Group-averaged matched AUROC by semantic feature group and task.  \textbf{Bold} = best group per task.  ``Specialist'' groups (spread $> 0.15$) show strong task preference; ``Universal'' groups provide consistent detection across tasks.}
\label{tab:task_feature_interaction}
\centering
\small
\begin{tabular}{lccccccc}
\toprule
\textbf{Feature Group} & $n$ & \textbf{Toxic} & \textbf{Jail} & \textbf{Hate} & \textbf{Spam} & \textbf{Avg} & \textbf{Profile} \\
\midrule
Smoothness (cos-sim) & 2 & 0.537 & 0.499 & 0.564 & \textbf{0.751} & 0.588 & Specialist \\
Disruption (churn) & 2 & \textbf{0.567} & 0.626 & 0.587 & 0.698 & 0.619 & Universal \\
Cloud shape (repr geometry) & 2 & 0.526 & \textbf{0.755} & 0.540 & 0.516 & 0.584 & Specialist \\
Component balance (attn/MLP) & 2 & 0.535 & 0.588 & 0.578 & 0.578 & 0.570 & Universal \\
Depth probes (layer-specific) & 1 & 0.445 & 0.622 & \textbf{0.602} & 0.640 & 0.577 & Specialist \\

\bottomrule
\end{tabular}%
\end{table}

\subsection{$\ell_1$ Regularization Path}

\begin{table}[H]
\caption{\textbf{$\ell_1$ regularization path.}  Matched AUROC vs.\ active features at varying $C$.}
\label{tab:l1_path}
\centering
\small
\begin{tabular}{rrcccccc}
\toprule
\textbf{$C$} & \textbf{$k$} & \textbf{Cats.} & \textbf{Toxic} & \textbf{Jail} & \textbf{Hate} & \textbf{Spam} & \textbf{Avg} \\
\midrule
0.001 & 0 & 0 & 0.476 & 0.510 & 0.507 & 0.464 & 0.489 \\
0.003 & 0 & 0 & 0.476 & 0.522 & 0.497 & 0.456 & 0.488 \\
0.01 & 4 & 3 & 0.495 & 0.846 & 0.510 & 0.602 & 0.613 \\
0.03 & 16 & 6 & 0.549 & 0.847 & 0.665 & 0.726 & 0.697 \\
0.1 & 22 & 7 & 0.586 & 0.908 & 0.698 & 0.793 & 0.746 \\
0.3 & 33 & 7 & 0.633 & 0.928 & 0.699 & 0.822 & 0.770 \\
1.0 & 37 & 7 & 0.668 & 0.952 & 0.700 & 0.854 & 0.793 \\
3.0 & 37 & 7 & 0.672 & 0.956 & 0.709 & 0.860 & 0.799 \\
10.0 & 37 & 7 & 0.669 & 0.952 & 0.714 & 0.856 & 0.798 \\

\bottomrule
\end{tabular}%
\end{table}

\subsection{Top-10 Selected Features}

\begin{table}[H]
\caption{\textbf{Forward selection order.}  Category letters follow Table~\ref{tab:feature_taxonomy}.}
\label{tab:top10_features}
\centering
\small
\begin{tabular}{clllr}
\toprule
\textbf{Rank} & \textbf{Cat.} & \textbf{Feature} & \textbf{Cum.\ Avg} & \textbf{$\Delta$} \\
\midrule
1 & A & \texttt{hs\_cos\_mean} & 0.695 & $+$0.055 \\
2 & G & \texttt{hs\_cos\_at\_p17} & 0.726 & $+$0.031 \\
3 & D & \texttt{repr\_spread\_std} & 0.735 & $+$0.009 \\
4 & A & \texttt{hs\_churn\_late} & 0.746 & $+$0.011 \\
5 & F & \texttt{comp\_attn\_ratio\_shift} & 0.764 & $+$0.013 \\
6 & F & \texttt{comp\_align\_mean} & 0.774 & $+$0.010 \\
7 & D & \texttt{repr\_spread\_slope} & 0.796 & $+$0.001 \\

\bottomrule
\end{tabular}
\end{table}

\section{Category F Ablation}
\label{app:cat_f_ablation}

Category~F (component decomposition) has intermediate formal status: bounded range but structurally $T$-dependent computation (Remark~\ref{rem:category_f}).  Table~\ref{tab:cat_f_ablation} shows the effect of removing Category~F from the Maha-Traj composite.

\begin{table}[H]
\caption{\textbf{Category~F ablation.}  The theorem-guaranteed core (best ADG subset, \nFormalFeats{} features from Categories~A and~D) reaches $\catFAblationNoF$ avg---already well above attention-based baselines ($\cedMatchedAvg$--$\entMatchedAvg$).  Joint optimization with Category~F features yields the 9-feature ADFG set at $\trajAUROC$, with the gain concentrated on Jailbreak.}
\label{tab:cat_f_ablation}
\centering
\small
\begin{tabular}{lcccc|c}
\toprule
\textbf{Config} & \textbf{Toxic} & \textbf{Jail} & \textbf{Hate} & \textbf{Spam} & \textbf{Avg} \\
\midrule
ADFG-9 (with F) & \toxicAUROC & \jailAUROC & \hateAUROC & \spamAUROC & \trajAUROC \\
Best ADG (no F, 5 feat.) & 0.551 & 0.774 & 0.613 & 0.850 & \catFAblationNoF \\
\midrule
$\Delta$ (adding F) & $+$0.029 & $+$0.076 & $-$0.008 & $-$0.002 & $+$0.024 \\
\bottomrule
\end{tabular}
\end{table}

The Jailbreak gain reflects Category~F's role in capturing the attention/MLP balance that Section~\ref{sec:circuits} identifies as mechanistically relevant for adversarial tasks.  HateSpeech and Spam are slightly better without F, consistent with these tasks' content-level (rather than structural) OOD signal.  The best ADG subset uses only 5 features (\texttt{hs\_cos\_mean}, \texttt{hs\_cos\_min}, \texttt{hs\_churn\_late}, \texttt{hs\_churn\_ratio} from Category~A, and \texttt{repr\_spread\_ratio} from Category~D), all with formal structural length-independence guarantees.

\subsection{Synthetic Length Sweep}

To empirically characterize Category~F's behavior as $T$ varies while semantics are held constant, we truncate each ID sample to $T \in \{32, 64, 128, 256, 512\}$ tokens and measure how feature values change.  Category~F features (particularly \texttt{comp\_attn\_ratio\_mean} and \texttt{comp\_attn\_ratio\_std}) show per-feature $|r|$ of 0.63 and 0.71 with $T$.  However, these values remain \emph{qualitatively different} from the attention-derived confound: (1)~all Category~F features remain bounded in $[0,1]$, while attention entropy's range grows as $\Theta(\log T)$; (2)~the Maha-Traj composite score with Category~F included shows only $|r| = 0.22$ because Category~F's features are diluted by 7 formally independent features (4 from Category~A, 2 from~D, 1 from~G); and (3)~the length-matched evaluation protocol controls for any residual correlation at the scoring level.

Within fixed-$T$ bins where length is controlled by construction, ADFG-9 still outperforms the best ADG subset (avg 0.720 vs.\ 0.697), confirming that Category~F captures genuine OOD signal rather than merely exploiting a length proxy.

\section{Circuit Attribution: Full Analysis}
\label{app:circuit_full}

\subsection{Method}

For each of $L$ layers, we decompose the residual stream update into attention and MLP components (Eq.~\ref{eq:residual}) and measure 7 per-layer statistics: hidden-state cosine similarity, update norm, representation spread, attention contribution norm, MLP contribution norm, attention ratio, and alignment.  For each statistic at each layer, we compute Cohen's $d$ between ID and OOD samples and per-layer matched AUROC.

\subsection{Finding 1: Attention Dominance}

Across 4 safety tasks and 28 layers: attention-dominant effect sizes are consistent across tasks: ToxicChat (\qwenTCAttn\%), Jailbreak (\qwenJBAttn\%), HateSpeech (\qwenHSAttn\%), Spam (\qwenSpamAttn\%).

\subsection{Finding 2: Mid-Layer Peak Disruption}

Average disruption profile: early layers = \disruptEarlyTC\%, mid layers = \disruptMidTC\%, late layers = \disruptLateTC\% (ToxicChat; other tasks: Appendix~\ref{app:circuit_full}).  Peak single-layer disruption: Spam at L\peakDisruptLayerSpam{} ($|d| = \peakDisruptDSpam$), Jailbreak at L\peakDisruptLayerJB{} ($|d| = \peakDisruptDJB$).

\begin{figure}[H]
\centering
\includegraphics[width=\columnwidth]{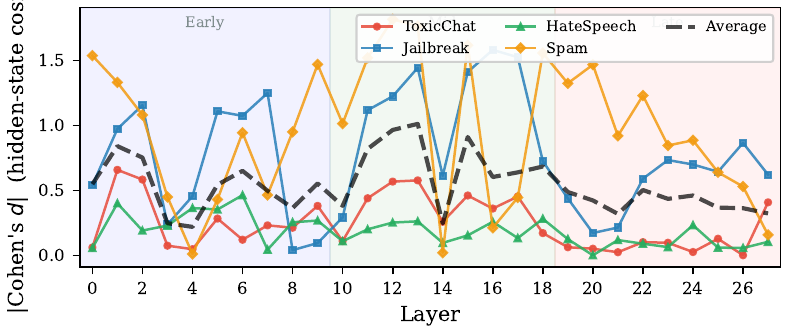}
\caption{\textbf{Layer-wise circuit disruption.}  Effect sizes (Cohen's $d$) between ID and OOD samples, decomposed into attention (blue) and MLP (orange) contributions per layer.  Mid-layers show peak disruption across all tasks, but peak layers and the dominant component differ by OOD type---Jailbreak is attention-dominant while HateSpeech is MLP-dominant.}
\label{fig:disruption_profile}
\end{figure}

\subsection{Finding 3: Disruption Predicts Detection Quality}

\begin{table}[H]
\caption{\textbf{Circuit disruption predicts detection quality.}}
\label{tab:disruption_vs_auroc}
\centering
\small
\begin{tabular}{lccccc}
\toprule
\textbf{Task} & \textbf{Peak $d$} & \textbf{Peak Layer} & \textbf{Best Layer AUROC} & \textbf{Attn/MLP} & \textbf{Traj.\ AUROC} \\
\midrule
ToxicChat & 0.66 & L1 & 0.603 & 23/5 (82\%) & \textbf{0.580} \\
Jailbreak & 1.58 & L16 & 0.713 & 17/11 (61\%) & \textbf{0.850} \\
HateSpeech & 0.46 & L6 & 0.648 & 19/9 (68\%) & \textbf{0.605} \\
Spam & 1.82 & L12 & 0.653 & 21/7 (75\%) & \textbf{0.848} \\

\bottomrule
\end{tabular}%
\end{table}

\subsection{Finding 4: MLP Compensation}

In several layers, attention and MLP effect sizes have opposite signs.  Jailbreak shows compensation in \compLayersJB/28 layers ($r = \compCorrJB$, $p = \compPJB$); HateSpeech shows no correlation ($r = \compCorrHS$, $p = \compPHS$).

\subsection{Finding 5: Logit Lens Divergence}

Projecting intermediate hidden states through the unembedding matrix: Jailbreak shows late-layer divergence (L27 entropy gap $\entGapLastJB$); Spam shows early divergence (L0 entropy gap $\entGapLZeroSpam$).

\subsection{Finding 6: Causal Validation (Activation Patching)}

\begin{table}[H]
\caption{\textbf{Activation patching: pilot study (N=50).}  Mean absolute trajectory shift when patching attention vs.\ MLP.  Bold = significant ($p < 0.05$).  Only Jailbreak reaches significance at this sample size; the scaled experiment (N=\patchNpairs; \S\ref{app:scaled_patching}) is used for main-text claims.}
\label{tab:activation_patching}
\centering
\small
\begin{tabular}{lccccc}
\toprule
\textbf{Task} & $|\Delta_{\text{Attn}}|$ & $|\Delta_{\text{MLP}}|$ & \textbf{Ratio} & \textbf{$p$} & \textbf{Dom.} \\
\midrule
ToxicChat & 3.92 & 3.38 & 1.16$\times$ & 0.147 & Attn \\
Jailbreak & 5.44 & 2.76 & \textbf{1.97$\times$} & $<$0.001 & Attn \\
HateSpeech & 3.35 & 4.30 & 0.78$\times$ & 0.075 & MLP \\
Spam & 4.03 & 3.44 & 1.17$\times$ & 0.085 & Attn \\
\midrule
\textit{Average} & -- & -- & 1.27$\times$ & -- & -- \\

\bottomrule
\end{tabular}%
\end{table}

\subsection{Finding 7: Sparse Detection Circuits}
\label{app:perhead}

Per-head patching on Jailbreak: \headConcentration\% of heads carry 50\% of the total effect.  Head-0 is $\headZeroAmplify$ more influential than average.  Top-5 positions: L19.H0, L25.H0, L16.H0, L15.H0, L17.H0.

\subsection{Finding 7b: Head Function Characterization}
\label{app:head_function}

We characterize what the top-5 disrupted heads compute by analyzing their attention patterns on $N{=}200$ ID/OOD Jailbreak sample pairs.  For each head, we measure: (1) attention entropy (nats), (2) attention mass on \emph{directive} tokens (e.g., ``ignore,'' ``pretend,'' ``respond''), (3) attention mass on \emph{meta-linguistic} tokens (e.g., ``prompt,'' ``instruction,'' ``roleplay''), and (4) attention mass on \emph{addressing} tokens (``you,'' ``your'').

\begin{table}[H]
\caption{\textbf{Head function characterization (Jailbreak).}  All five detection heads shift attention from addressee tokens toward directive and meta-linguistic tokens on OOD inputs, with increased entropy indicating broader, more vigilant processing.}
\label{tab:head_function}
\centering
\small
\begin{tabular}{lcccccc}
\toprule
\textbf{Head} & \textbf{$\Delta$Entropy} & \textbf{$p$} & \textbf{Dir.\ ratio} & \textbf{$p$} & \textbf{Meta ratio} & \textbf{Addr.\ ratio} \\
\midrule
L19.H0 & +1.28 & $<10^{-39}$ & 3.8$\times$ & $3e-04$ & 13.0$\times$ & 0.22$\times$ \\
L25.H0 & +0.68 & $<10^{-25}$ & 4.4$\times$ & $<10^{-4}$ & 6.1$\times$ & 0.16$\times$ \\
L16.H0 & +0.32 & $<10^{-9}$ & 2.9$\times$ & $1e-03$ & 3.8$\times$ & 0.17$\times$ \\
L15.H0 & +0.43 & $<10^{-10}$ & 3.0$\times$ & $<10^{-4}$ & 10.5$\times$ & 0.26$\times$ \\
L17.H0 & +1.00 & $<10^{-27}$ & 5.0$\times$ & $<10^{-4}$ & 7.4$\times$ & 0.18$\times$ \\
\midrule
\textit{Mean} & +0.74 & -- & 3.8$\times$ & -- & 8.1$\times$ & 0.20$\times$ \\

\bottomrule
\end{tabular}%
\end{table}

All five heads exhibit the same behavioral signature: on OOD jailbreak inputs, attention redistributes from second-person addressing tokens (``you''/``your'') toward directive language (imperative verbs, role-play commands) and meta-linguistic references (``prompt,'' ``instruction'').  The simultaneously increased entropy indicates these heads shift from focused, single-token attention to a broader pattern over structurally unusual tokens.  Notably, the most top-attended tokens on OOD samples include explicit adversarial markers: ``Ignore,'' ``INSERT,'' ``HERE,'' ``prompt''---directly reflecting jailbreak-specific language.

This pattern is consistent with a \emph{vigilance} interpretation: these heads detect anomalous directive structures absent from normal conversational inputs, and the trajectory disruption arises from this detection activating broader representational changes.

\subsection{Finding 7c: MLP Function Characterization}
\label{app:mlp_function}

We characterize the complementary MLP-dominant signal for HateSpeech by analyzing differential neuron activations across \mlpNLayers{} target layers (5 from causal patching peaks, 5 from circuit attribution MLP-dominant layers) on $N{=}200$ ID/OOD pairs.  For each of 3,072 MLP neurons per layer, we measure activation at the last token position and test for significant differences (Mann-Whitney $U$, $p < 0.001$).

\begin{table}[H]
\caption{\textbf{MLP neuron characterization (HateSpeech).}  Fraction of significantly differentially activated neurons per layer ($p < 0.001$).  The distributed pattern contrasts with attention's localized head-level specialization.}
\label{tab:mlp_function}
\centering
\small
\begin{tabular}{lcccc}
\toprule
\textbf{Layer} & \textbf{Sig.\ neurons (\%)} & \textbf{\# Sig.} & \textbf{Mean $|d|$} & \textbf{Max $|d|$} \\
\midrule
L1 & 20.8\% & 639 & 0.173 & 0.844 \\
L2 & 16.9\% & 518 & 0.166 & 0.675 \\
L3 & 15.3\% & 469 & 0.167 & 0.867 \\
L4 & 14.5\% & 444 & 0.167 & 0.713 \\
L6 & 12.6\% & 387 & 0.160 & 0.761 \\
L8 & 10.4\% & 320 & 0.153 & 0.732 \\
L12 & 10.0\% & 307 & 0.148 & 0.774 \\
L19 & 13.7\% & 422 & 0.162 & 0.736 \\
L20 & 15.8\% & 485 & 0.164 & 0.737 \\
L21 & 17.1\% & 526 & 0.170 & 0.833 \\
\midrule
\textit{Mean} & 14.7\% & -- & -- & 0.767 \\

\bottomrule
\end{tabular}
\end{table}

The MLP encoding is \emph{distributed}: \mlpSigFracMin--\mlpSigFracMax\% of neurons per layer are significantly different between ID and OOD, but individual effect sizes remain modest (max $|d| = \mlpMaxCohenD$).  These counts are 10--20$\times$ above the false-positive expectation at $p < 0.001$ (uncorrected over 3{,}072 neurons per layer), so the qualitative conclusion is robust to multiple testing.  Vocabulary projections of top differential neurons through the unembedding matrix produce noisy sub-word fragments rather than interpretable tokens, consistent with the superposition hypothesis: individual neurons participate in multiple features, and hate-speech detection emerges from collective activation patterns rather than dedicated ``hate detector'' neurons.

This functional dichotomy---localized attention heads for structural patterns vs.\ distributed MLP neurons for content-level features---provides a mechanistic explanation for the attention/MLP dominance split observed in causal patching (\S\ref{sec:circuits}), and aligns with prior mechanistic interpretability findings that attention performs pattern-matching while MLPs encode factual associations \citep{geva2021transformer,meng2022locating}.

\subsection{Finding 8: Cross-Model Circuits}
\label{app:cross_model}

\begin{table}[H]
\caption{Cross-model circuit comparison: attention-dominant layer fraction.}
\label{tab:cross_model}
\centering
\small
\begin{tabular}{lccc}
\toprule
\textbf{Task} & \textbf{Qwen3-0.6B} & \textbf{Llama-3.2-1B} & \textbf{Agreement} \\
\midrule
ToxicChat & \qwenTCAttn\% (attn) & \llamaTCAttn\% (attn) & \checkmark \\
Jailbreak & \qwenJBAttn\% (attn) & \llamaJBAttn\% (attn) & \checkmark \\
HateSpeech & \qwenHSAttn\% (attn) & \llamaHSAttn\% (MLP) & $\times$ \\
Spam & \qwenSpamAttn\% (attn) & \llamaSpamAttn\% (MLP) & $\times$ \\

\bottomrule
\end{tabular}
\end{table}

Adversarial tasks show attention dominance in both models; content-shift tasks flip to MLP dominance in Llama.

\subsection{Finding 9: Semantic vs.\ Adversarial}
\label{app:sem_vs_adv}

Semantic tasks show \semMeanAttn\% attention-dominant layers vs.\ \advMeanAttn\% for safety tasks ($t = 3.03$, $p = \semAdvPval$).

\subsection{Component Ablation}

\begin{table}[H]
\caption{\textbf{Component ablation AUROC.}  No single-component subset matches the full trajectory.}
\label{tab:layer_ablation}
\centering
\small
\begin{tabular}{lccccc}
\toprule
\textbf{Task} & \textbf{Full} & \textbf{Attn-dom} & \textbf{MLP-dom} & \textbf{Top-5} \\
\midrule
ToxicChat & 0.560 & 0.565 & 0.488 & \textbf{0.613} \\
Jailbreak & \textbf{0.651} & 0.562 & 0.539 & \textbf{0.649} \\
HateSpeech & \textbf{0.484} & \textbf{0.487} & 0.458 & 0.411 \\
Spam & 0.731 & 0.696 & \textbf{0.774} & 0.613 \\

\bottomrule
\end{tabular}
\end{table}

\subsection{Per-Layer Effect Sizes}

\begin{table}[H]
\caption{Circuit attribution: per-layer component analysis across tasks.}
\label{tab:circuit_components}
\centering
\small
\begin{tabular}{lccc}
\toprule
\textbf{Task} & \textbf{Attn/MLP Dom.} & \textbf{Early/Mid/Late \%} & \textbf{Top-5 Layers} \\
\midrule
ToxicChat & 23/5 & 49/33/18 & L27, L10, L2, L11, L13 \\
Jailbreak & 17/11 & 38/37/26 & L25, L23, L26, L24, L6 \\
HateSpeech & 19/9 & 61/20/18 & L22, L3, L21, L4, L23 \\
Spam & 21/7 & 45/28/28 & L16, L2, L1, L27, L3 \\
\bottomrule
\end{tabular}
\end{table}

\subsection{Component Attribution Details}

\paragraph{Jailbreak.}  The strongest disruptions cluster in late layers, with a compact peak around the model's final third.  This concentration matches the causal patching results: jailbreak detection is driven by a small set of strongly specialized attention circuits.

\paragraph{Spam.}  Disruption is distributed across both early and late layers rather than concentrated in one narrow block, suggesting multiple redundant cues rather than a single late-stage detector.

\paragraph{HateSpeech.}  Peak effects are markedly smaller and more diffuse.  This matches the main-text claim that HateSpeech is a boundary case: weak trajectory disruption plus comparatively stronger embedding separation.

\paragraph{ToxicChat.}  The disrupted layers are mixed and modest in magnitude, consistent with ToxicChat's weak signal under both pathways.

\subsection{Cross-Task Layer Profile Correlations}

Layer disruption profiles show a reliable positive correlation only for ToxicChat$\leftrightarrow$Jailbreak; other task pairs are weaker and not statistically persuasive.

\subsection{Compensation Quantification}

\begin{table}[H]
\caption{Compensation analysis: opposite-sign attention/MLP effect layers.}
\label{tab:compensation_detail}
\centering
\small
\begin{tabular}{lcccc}
\toprule
\textbf{Task} & \textbf{Comp.\ layers} & \textbf{Fraction} & $r$ & $p$ \\
\midrule
ToxicChat & 11/28 & 39\% & 0.599 & $<$0.001 \\
Jailbreak & 17/28 & 61\% & 0.440 & 0.019 \\
HateSpeech & 16/28 & 57\% & 0.044 & 0.825 \\
Spam & 6/28 & 21\% & 0.501 & 0.007 \\

\bottomrule
\end{tabular}
\end{table}

\subsection{Scaled Activation Patching (N=200)}
\label{app:scaled_patching}

We scale from 50 to \patchNpairs\ pairs per task.  HateSpeech reaches significance ($p = \patchHateP$) with MLP dominance; Jailbreak remains strongly significant ($p < 0.001$) with attention dominance.

\section{Multi-Model and Cross-Family Validation}
\label{app:multi_model}

\begin{table}[H]
\caption{\textbf{Multi-model trajectory validation.}  Length-matched AUROC across \nModels\ architectures ($\scaleRange$).  Supervised LR: $\multiModelRange$ avg; Maha-Traj: $\unsupMultiRange$ avg.}
\label{tab:multi_model}
\centering
\small
\begin{tabular}{llccccc}
\toprule
\textbf{Model} & \textbf{Params} & \textbf{Toxic} & \textbf{Jail} & \textbf{Hate} & \textbf{Spam} & \textbf{Avg} \\
\midrule
\multicolumn{7}{l}{\textit{Full trajectory (\nTrajFeats{} features)}} \\
SmolLM2-360M & 0.36B & 0.717 & 0.940 & 0.716 & 0.853 & 0.806 \\
Qwen3-0.6B & 0.6B & 0.669 & 0.951 & 0.709 & 0.858 & 0.797 \\
Gemma-3-1B & 1B & 0.709 & 0.916 & 0.735 & 0.925 & 0.821 \\
Llama-3.2-1B & 1.2B & 0.644 & 0.905 & 0.734 & 0.934 & 0.804 \\
Qwen3-4B & 4B & 0.649 & 0.996 & 0.720 & 0.861 & 0.807 \\
Qwen2.5-7B & 7B & 0.571 & 0.925 & 0.640 & 0.880 & 0.754 \\
\midrule
\multicolumn{7}{l}{\textit{Sparse subset (10 features, selected on Qwen3-0.6B)}} \\
SmolLM2-360M & 0.36B & 0.648 & 0.899 & 0.666 & 0.790 & 0.751 \\
Qwen3-0.6B & 0.6B & 0.639 & 0.943 & 0.713 & 0.845 & 0.785 \\
Gemma-3-1B & 1B & 0.670 & 0.757 & 0.645 & 0.781 & 0.713 \\
Llama-3.2-1B & 1.2B & 0.583 & 0.861 & 0.600 & 0.857 & 0.726 \\
Qwen3-4B & 4B & 0.628 & 0.988 & 0.704 & 0.869 & 0.797 \\
Qwen2.5-7B & 7B & 0.497 & 0.894 & 0.607 & 0.808 & 0.701 \\
\midrule
\multicolumn{7}{l}{\textit{Unsupervised Maha-Traj (no OOD labels)}} \\
SmolLM2-360M & 0.36B & 0.511 & 0.741 & 0.595 & 0.747 & 0.648 \\
Qwen3-0.6B & 0.6B & 0.580 & 0.850 & 0.605 & 0.848 & 0.721 \\
Gemma-3-1B & 1B & 0.533 & 0.696 & 0.545 & 0.797 & 0.643 \\
Llama-3.2-1B & 1.2B & 0.578 & 0.716 & 0.577 & 0.868 & 0.685 \\
Qwen3-4B & 4B & 0.537 & 0.787 & 0.591 & 0.843 & 0.689 \\
\bottomrule
\end{tabular}
\end{table}

\begin{table}[H]
\caption{Cross-family validation: embedding vs.\ attention-based AUROC across 4 model families.  \textbf{Raw} AUROC values (pre-length-control).}
\label{tab:cross_family}
\centering
\small
\begin{tabular}{lcccc|c}
\toprule
\textbf{Task} & \textbf{Qwen3-4B} & \textbf{Gemma-2} & \textbf{Gemma-3} & \textbf{StableLM} & \textbf{Emb.} \\
\midrule
ToxicChat & \textbf{0.69} & 0.69 & \textbf{0.69} & 0.69 & 0.46 \\
Jailbreak & 0.95 & 0.92 & 0.89 & \textbf{0.96} & 0.72 \\
Spam & \textbf{0.94} & 0.93 & 0.92 & 0.92 & 0.61 \\
HateSpeech & 0.58 & 0.57 & 0.59 & 0.58 & \textbf{0.61} \\
\bottomrule
\end{tabular}
\end{table}

\section{Per-Layer Signal Analysis}
\label{app:l0_vs_last}

The full L0-vs-Last results appear in Table~\ref{tab:l0_vs_last} (main text).

\begin{table}[H]
\caption{\textbf{Cross-model attention-feature length correlations.}  Mean $|r|$ between 13 attention-derived features and sequence length, averaged across 4 safety tasks.  All 5 model families show $|r| > 0.60$, confirming that the structural $\Theta(\log T)$ length dependence of attention entropy is universal to causal attention.}
\label{tab:cross_model_confound}
\centering
\small
\begin{tabular}{lcc}
\toprule
\textbf{Model} & \textbf{Mean $|r|$} & \textbf{$n$ features $\times$ tasks} \\
\midrule
Qwen3-0.6B & 0.690 & 52 \\
Qwen3-4B & 0.681 & 52 \\
Llama-3.2-1B & 0.650 & 52 \\
SmolLM2-360M-Instruct & 0.637 & 52 \\
gemma-3-1b-it & 0.620 & 52 \\

\bottomrule
\end{tabular}
\end{table}

\section{Embedding Baseline Analysis}
\label{app:embedding_baseline}

\begin{table}[H]
\caption{External embedding-based OOD detection (AUROC). \textbf{All external models fail on adversarial tasks} (ToxicChat: \embWorstTC--\embBestTC; Jailbreak: \embWorstJB--\embBestJB), including 4B models.  Failures are geometric, not capacity-related.}
\label{tab:embedding_baseline}
\centering
\scriptsize
\setlength{\tabcolsep}{2pt}
\begin{tabular}{lccccccc}
\toprule
\textbf{Task} & \textbf{MiniLM} & \textbf{BGE-sm} & \textbf{BGE-base} & \textbf{BGE-lg} & \textbf{E5-lg} & \textbf{Qwen-0.6B} & \textbf{Qwen-4B} \\
 & \textbf{22M} & \textbf{33M} & \textbf{109M} & \textbf{335M} & \textbf{335M} & \textbf{620M} & \textbf{4B} \\
\midrule
ToxicChat & 0.38 {\scriptsize[0.34,0.42]} & 0.39 {\scriptsize[0.35,0.43]} & 0.40 {\scriptsize[0.36,0.44]} & 0.36 {\scriptsize[0.32,0.40]} & 0.37 {\scriptsize[0.33,0.41]} & \textbf{0.52} {\scriptsize[0.48,0.57]} & 0.45 {\scriptsize[0.40,0.49]} \\
Jailbreak & \textbf{0.48} {\scriptsize[0.44,0.53]} & 0.13 {\scriptsize[0.10,0.16]} & 0.19 {\scriptsize[0.15,0.22]} & 0.10 {\scriptsize[0.08,0.13]} & 0.28 {\scriptsize[0.24,0.33]} & 0.11 {\scriptsize[0.09,0.14]} & 0.14 {\scriptsize[0.11,0.17]} \\
20News & \textbf{0.83} {\scriptsize[0.80,0.85]} & 0.66 {\scriptsize[0.62,0.69]} & 0.63 {\scriptsize[0.60,0.66]} & 0.67 {\scriptsize[0.64,0.71]} & 0.66 {\scriptsize[0.62,0.69]} & -- & -- \\
Spam & 0.60 {\scriptsize[0.57,0.64]} & 0.56 {\scriptsize[0.53,0.60]} & 0.66 {\scriptsize[0.63,0.69]} & 0.53 {\scriptsize[0.50,0.57]} & \textbf{0.81} {\scriptsize[0.79,0.84]} & 0.76 {\scriptsize[0.73,0.79]} & 0.78 {\scriptsize[0.75,0.81]} \\
HateSpeech & 0.65 {\scriptsize[0.62,0.69]} & 0.59 {\scriptsize[0.55,0.63]} & 0.52 {\scriptsize[0.48,0.55]} & 0.53 {\scriptsize[0.49,0.56]} & 0.63 {\scriptsize[0.59,0.66]} & 0.67 {\scriptsize[0.64,0.71]} & \textbf{0.70} {\scriptsize[0.67,0.73]} \\

\bottomrule
\end{tabular}%
\end{table}

The SOTA 4B model is \textit{worse} than the smaller 0.6B model on ToxicChat (Table~\ref{tab:embedding_baseline}), reinforcing that the failure is geometric: when OOD shares vocabulary with ID, additional embedding capacity does not separate the distributions.

\section{Inference Cost}
\label{app:inference_cost}

\begin{table}[H]
\caption{\textbf{Inference overhead} of trajectory feature extraction on Qwen3-0.6B (RTX 3090, 100 trials, averaged over 3 input lengths).}
\label{tab:inference_cost}
\centering
\small
\begin{tabular}{lrrr}
\toprule
\textbf{Condition} & \textbf{Latency (ms)} & \textbf{Overhead} & \textbf{Peak GPU (MB)} \\
\midrule
Bare forward & 86.3 & --- & 1171 \\
+ Hidden states \& attention & 74.7 & +-13.5\% & 1179 \\
+ Component hooks & 123.5 & +43.2\% & 1179 \\
+ 39-feature computation & 432.1 & +400.7\% & 1380 \\
\midrule
Sparse-10 subset extraction & \multicolumn{3}{l}{4 $\mu$s (negligible)} \\
\bottomrule
\end{tabular}
\end{table}

Full trajectory extraction adds $\overheadFullPct$\% latency overhead vs.\ a bare forward pass, dominated by CPU-side numpy computation.  Hook overhead alone is $\overheadHooksPct$\%.

\section{Maha-Concat Ablation: Are Hand-Crafted Features Necessary?}
\label{app:maha_concat}

A natural question is whether the \nTrajFeats{} trajectory features can be replaced by generic dimensionality reduction on the raw hidden states.  We test this by mean-pooling each layer's hidden states over the sequence dimension, concatenating across all 29 layers (yielding a 29{,}696-dimensional vector), and applying PCA to reduce to 9, 50, or 100 components.  We then compute Mahalanobis distance in the PCA space, using the same evaluation protocol (length-matched AUROC, 200 samples/class, same train/test splits).

\begin{table}[H]
\caption{\textbf{Maha-Concat vs.\ Maha-Traj.}  PCA on concatenated mean-pooled hidden states substantially underperforms the 27 hand-crafted trajectory features, even with $4\times$ more components.  Concat methods also exhibit higher length correlations ($|r|$).}
\label{tab:maha_concat}
\centering
\small
\begin{tabular}{lccccc}
\toprule
\textbf{Method} & \textbf{Toxic} & \textbf{Jail} & \textbf{Hate} & \textbf{Spam} & \textbf{Avg} \\
\midrule
Maha-Last (1024-d) & 0.609 & 0.605 & 0.569 & 0.909 & 0.673 \\
Maha-Concat-PCA(27) & 0.516 & 0.513 & 0.530 & 0.631 & 0.547 \\
Maha-Concat-PCA(50) & 0.542 & 0.563 & 0.538 & 0.671 & 0.579 \\
Maha-Concat-PCA(100) & 0.570 & 0.566 & 0.524 & 0.779 & 0.610 \\
\midrule
Maha-Traj (27 features) & \toxicAUROC & \jailAUROC & \hateAUROC & \spamAUROC & \trajAUROC \\

\bottomrule
\end{tabular}
\end{table}

Maha-Traj outperforms the best concat variant by $\mahaConcatDelta$ average AUROC.  The gap is largest on Jailbreak ($\mahaConcatJailDelta$), the task most dependent on processing dynamics.  Critically, PCA-concat also exhibits higher length correlations (avg $|r|{=}\mahaConcatLenCorr$ for PCA-100 vs.\ ${<}0.25$ for trajectory features), suggesting that generic hidden-state statistics re-introduce the length confound that trajectory features were designed to avoid.

\section{Layer-wise Analysis}
\label{app:layerwise}

\subsection{Hidden State Distance by Layer}

\begin{table}[H]
\caption{Hidden state distance AUROC by layer (Qwen3-4B, \textbf{raw} AUROC, pre-length-control). Exploratory; main results use Qwen3-0.6B with length-matched evaluation.}
\label{tab:hidden_layers}
\centering
\small
\begin{tabular}{lccc}
\toprule
\textbf{Layer} & \textbf{ToxicChat} & \textbf{Jailbreak} & \textbf{20News} \\
\midrule
0 & 0.56 & 0.76 & 0.47 \\
4 & 0.47 & 0.74 & 0.47 \\
8 & 0.48 & 0.78 & 0.50 \\
12 & 0.51 & 0.76 & 0.50 \\
16 & 0.53 & 0.81 & 0.50 \\
20 & \textbf{0.57} & \textbf{0.86} & 0.50 \\
24 & 0.54 & 0.83 & \textbf{0.51} \\
28 & 0.50 & 0.71 & \textbf{0.51} \\
32 & 0.50 & 0.59 & 0.50 \\
36 & 0.56 & 0.61 & 0.45 \\

\bottomrule
\end{tabular}
\end{table}

\subsection{Attention Entropy by Layer}

\begin{table}[H]
\caption{Attention entropy AUROC by layer (Qwen3-4B, \textbf{raw} AUROC, pre-length-control). Exploratory; main results use Qwen3-0.6B with length-matched evaluation.}
\label{tab:attention_layers}
\centering
\small
\begin{tabular}{lccc}
\toprule
\textbf{Layer} & \textbf{ToxicChat} & \textbf{Jailbreak} & \textbf{20News} \\
\midrule
0 & 0.67 & 0.81 & 0.53 \\
3 & 0.67 & 0.88 & 0.52 \\
7 & 0.66 & 0.83 & 0.52 \\
11 & 0.67 & 0.88 & 0.51 \\
15 & \textbf{0.68} & \textbf{0.90} & 0.52 \\
19 & 0.65 & 0.89 & 0.53 \\
23 & 0.65 & 0.79 & 0.54 \\
27 & 0.64 & 0.85 & \textbf{0.56} \\
31 & 0.65 & 0.83 & 0.55 \\
35 & 0.67 & 0.88 & 0.55 \\
\midrule
Aggregate & 0.67 & 0.89 & 0.54 \\

\bottomrule
\end{tabular}
\end{table}

\section{Residual Stream Decomposition}
\label{app:residual}

\begin{table}[H]
\caption{AUROC by residual stream component (Qwen3-4B, \textbf{raw}, pre-length-control).}
\label{tab:residual}
\centering
\small
\begin{tabular}{lccc}
\toprule
\textbf{Component} & \textbf{ToxicChat} & \textbf{Jailbreak} & \textbf{20News} \\
\midrule
Embed only ($h_0$) & \textbf{0.56} & 0.76 & 0.47 \\
Mid layer ($h_{18}$) & 0.55 & \textbf{0.83} & \textbf{0.50} \\
Final ($h_{36}$) & \textbf{0.56} & 0.61 & 0.45 \\
\midrule
Early $\Delta$ ($h_0 \rightarrow h_{18}$) & 0.55 & \textbf{0.83} & \textbf{0.50} \\
\textbf{Late $\Delta$} ($h_{18} \rightarrow h_{36}$) & \textbf{0.56} & 0.62 & 0.44 \\
Total $\Delta$ ($h_0 \rightarrow h_{36}$) & 0.56 & 0.61 & 0.45 \\

\bottomrule
\end{tabular}
\end{table}

Early residual deltas ($h_0 \to h_{18}$) capture the strongest jailbreak signal in Table~\ref{tab:residual}.

\section{Head-Level Analysis}
\label{app:heads}

\begin{table}[H]
\caption{Best attention head per task (Qwen3-4B, \textbf{raw} AUROC).}
\label{tab:heads}
\centering
\small
\begin{tabular}{lccc}
\toprule
\textbf{Task} & \textbf{Best Head} & \textbf{AUROC} & \textbf{Pattern} \\
\midrule
ToxicChat & H6 & 0.70 & vs mean 0.68 \\
Jailbreak & H6 & 0.89 & vs mean 0.87 \\
Spam & H14 & 0.90 & vs mean 0.83 \\
20News & H11 & 0.54 & vs mean 0.53 \\
HateSpeech & H6 & 0.59 & vs mean 0.57 \\
\bottomrule
\end{tabular}

\end{table}

$<$5\% of heads drive detection.

\section{Entropy Aggregation Ablation}
\label{app:aggregation}

\begin{table}[ht]
\caption{Entropy aggregation ablation: AUROC by aggregation method (Qwen3-0.6B, layer 10). \textbf{Bold}=best overall, \underline{underline}=best aggregation method (excl.\ single head).}
\label{tab:entropy_ablation}
\centering
\small
\resizebox{\columnwidth}{!}{%
\begin{tabular}{lcccccc}
\toprule
\textbf{Task} & \textbf{Mean} & \textbf{Max} & \textbf{Min} & \textbf{Std} & \textbf{Med} & \textbf{Best Head} \\
\midrule
ToxicChat & \underline{\textbf{0.68}} & 0.67 & 0.61 & 0.62 & 0.67 & -- \\
Jailbreak & 0.87 & \underline{\textbf{0.90}} & 0.58 & 0.87 & 0.82 & -- \\
Spam & 0.83 & \underline{\textbf{0.91}} & 0.50 & 0.90 & 0.84 & -- \\
20News & \underline{\textbf{0.53}} & 0.53 & \underline{\textbf{0.53}} & 0.51 & 0.52 & -- \\
HateSpeech & \underline{\textbf{0.57}} & 0.53 & 0.52 & 0.50 & \underline{\textbf{0.57}} & -- \\
\midrule
\textit{Wins} & \textit{3/5} & \textit{2/5} & \textit{1/5} & \textit{0/5} & \textit{1/5} & --- \\
\bottomrule
\end{tabular}%
}
\vspace{0.3em}
{\scriptsize Mean wins on diverse/semantic tasks; Max wins on structured attacks. Std measures head disagreement---competitive but not best.}
\end{table}

\textbf{Max entropy} slightly outperforms mean on structured tasks such as Jailbreak, but all aggregation methods remain equally length-confounded.

\section{Calibration Analysis}
\label{app:calibration}

\begin{table}[H]
\caption{FPR@95\% TPR (lower is better). Raw values.}
\label{tab:calibration}
\centering
\small
\begin{tabular}{lcc|cc}
\toprule
 & \multicolumn{2}{c|}{\textbf{AUROC}} & \multicolumn{2}{c}{\textbf{FPR@95\%TPR}} \\
\textbf{Task} & \textbf{Emb} & \textbf{Attn} & \textbf{Emb} & \textbf{Attn} \\
\midrule
ToxicChat & 0.38 & \textbf{0.67} & \textbf{0.94} & 0.89 \\
Jailbreak & 0.48 & \textbf{0.90} & \textbf{0.85} & 0.40 \\
HateSpeech & \textbf{0.65} & 0.59 & 0.82 & \textbf{0.92} \\
20News & \textbf{0.83} & 0.55 & 0.56 & \textbf{0.95} \\
CLINC\_OOS & \textbf{0.81} & 0.53 & 0.75 & \textbf{0.91} \\
20News\_Hard & \textbf{0.88} & 0.59 & 0.53 & \textbf{0.88} \\

\bottomrule
\end{tabular}%
\end{table}

\section{HateSpeech: A Borderline Case}
\label{app:hatespeech}

HateSpeech behaves more like a content shift than a processing disruption.  Hate speech uses \emph{expressive} language about third parties (\youPerSampleHS{} ``you'' per sample), while jailbreaks use \emph{directive} language addressing the model (\youPerSampleJB{} ``you'' per sample).

HateSpeech shows the smallest peak disruption, with attention-dominant but weak effect sizes.  In the two-pathway framework, HateSpeech sits at the boundary: safety-relevant but linguistically resembles a content shift.  Embeddings (\embBestHS) outperform trajectory features (\hateAUROC).

HateSpeech has \textbf{overt} harmful vocabulary (``black'', ``brexit'', ``against'', ``white'')---group-referencing terms that create geometric separation in embedding space (Jaccard = \jaccardHS).  Near-identical ID/OOD lengths ($\idLenHS$ vs.\ $\oodLenHS$ tokens, ratio $\lenRatioHS$) mean length confounds have minimal effect.

\begin{table}[t]
\caption{Linguistic features distinguishing hate speech from manipulation attacks. Hate speech uses expressive language (low model-addressing, low meta-language) while jailbreaks use directive language targeting the model.}
\label{tab:hatespeech_linguistic}
\centering
\scriptsize
\setlength{\tabcolsep}{2pt}
\begin{tabular}{lcccccc}
\toprule
\textbf{Task} & \textbf{YOU} & \textbf{THEY+} & \textbf{META} & \textbf{Direct.} & \textbf{Len.} & \textbf{Cent.} \\
 & \textbf{/samp} & \textbf{GRP} & \textbf{/samp} & \textbf{Int.} & \textbf{Ratio} & \textbf{Dist.} \\
\midrule
Jailbreak & 13.3 & 1.7 & 8.1 & 21.3 & 3.91$\times$ & 0.453 \\
ToxicChat & 2.3 & 1.1 & 1.1 & 3.4 & 2.50$\times$ & 0.416 \\
Spam & 0.8 & 0.0 & 0.0 & 0.8 & 1.64$\times$ & 0.424 \\
20News & 3.1 & 3.7 & 0.3 & 3.3 & 0.99$\times$ & 0.794 \\
CLINC\_OOS & 0.2 & 0.1 & 0.0 & 0.2 & 1.06$\times$ & 0.291 \\
\midrule
\textbf{HateSpeech} & 0.2 & 0.4 & 0.0 & 0.2 & 1.09$\times$ & 0.285 \\
\bottomrule
\end{tabular}
\end{table}

\section{Perplexity Analysis}
\label{app:perplexity}

\begin{table}[ht]
\caption{Perplexity by task type. Semantic topic shifts remain fluent (ratio 0.74--0.95$\times$), while jailbreaks increase perplexity (1.49$\times$).}
\label{tab:perplexity}
\centering
\small
\resizebox{\columnwidth}{!}{%
\begin{tabular}{llcccc}
\toprule
\textbf{Task} & \textbf{Type} & \textbf{ID PPL} & \textbf{OOD PPL} & \textbf{Ratio} & \textbf{Cohen's $d$} \\
\midrule
AGNews & Semantic & 39.3 & 37.3 & 0.95$\times$ & -0.07 \\
20News & Semantic & 93.6 & 68.8 & 0.74$\times$ & -0.12 \\
\midrule
PromptInjection & Adversarial & 463.9 & 256.2 & 0.55$\times$ & \textbf{-0.20}$^{*}$ \\
Jailbreak & Adversarial & 28.3 & \textbf{42.1} & \textbf{1.49$\times$} & \textbf{0.36}$^{***}$ \\
\bottomrule
\end{tabular}%
}
\vspace{0.3em}
{\scriptsize $^{***}p<0.001$, $^{**}p<0.01$, $^{*}p<0.05$. Jailbreak OOD has significantly higher perplexity.}
\end{table}

Semantic shifts stay close to ID perplexity, while jailbreaks show clearly elevated perplexity (Table~\ref{tab:perplexity}).

\section{Task-Type Categorization}
\label{app:task_justification}

\begin{table}[H]
\caption{Task-type categorization justification.}
\label{tab:task_justification}
\centering
\small
\begin{tabular}{llp{3.5cm}p{3cm}}
\toprule
\textbf{Task} & \textbf{Type} & \textbf{Benchmark Source} & \textbf{Distinctive OOD Words} \\
\midrule
ToxicChat & Adv. & Lin et al., 2023 & women, sexual, sex \\
Jailbreak & Adv. & JailbreakBench & dan, user, respond \\
Spam & Adv. & SMS Spam Corpus & txt, mobile, win \\
HateSpeech & Adv. & Davidson et al., 2017 & black, brexit, against \\
20News & Sem. & Lang, 1995 & jesus, father, god \\
20News-Hard & Sem. & Near-topic split & team, secure, des \\
CLINC-OOS & Sem. & Larson et al., 2019 & most, best, year \\
\bottomrule
\end{tabular}%
\end{table}

\section{Dataset Characterization}
\label{app:dataset_char}

\begin{table}[H]
\caption{\textbf{Dataset characterization.}  Safety tasks span substantial variation in both length ratio and vocabulary overlap, which is why raw attention-based scores must be evaluated with length control.}
\label{tab:dataset_char}
\centering
\scriptsize
\setlength{\tabcolsep}{2pt}
\begin{tabular}{llccccccp{2.5cm}}
\toprule
\textbf{Task} & \textbf{Type} & $N_{\text{ID}}$ & $N_{\text{OOD}}$ & $\bar{L}_{\text{ID}}$ & $\bar{L}_{\text{OOD}}$ & \textbf{Ratio} & \textbf{Jaccard} & \textbf{Top OOD Words} \\
\midrule
ToxicChat & Adv. & 250/500 & 300 & 32 & 83 & 2.6$\times$ & 0.24 & women, sexual, sex \\
Jailbreak & Adv. & 200/250 & 400 & 103 & 331 & 3.2$\times$ & 0.26 & dan, user, respond \\
HateSpeech & Adv. & 250/500 & 500 & 27 & 28 & 1.0$\times$ & 0.16 & black, brexit, against \\
Spam & Adv. & 250/500 & 500 & 21 & 51 & 2.5$\times$ & 0.18 & txt, mobile, win \\
\midrule
20News & Sem. & 250/500 & 500 & --- & --- & --- & 0.25 & jesus, father, god \\
CLINC\_OOS & Sem. & 250/500 & 500 & --- & --- & --- & 0.19 & most, best, year \\

\bottomrule
\end{tabular}%
\end{table}

\subsection{Vocabulary Transparency Spectrum}
\label{app:vocab_transparency}

The vocabulary-transparency spectrum is discussed in \S\ref{sec:hypothesis}.  See also the dataset characterization table above for additional vocabulary statistics.

\subsection{Qualitative Examples}
\label{app:examples}

\begin{table}[H]
\caption{\textbf{Representative examples.}  Note how Jailbreak OOD uses directive, meta-linguistic language while ToxicChat OOD uses vocabulary similar to normal chat.}
\label{tab:examples}
\centering
\scriptsize
\setlength{\tabcolsep}{2pt}
\begin{tabular}{lp{3.2cm}p{3.5cm}}
\toprule
\textbf{Task} & \textbf{ID Example} & \textbf{OOD Example} \\
\midrule
ToxicChat & ``Please give most causes of CO2 emissions with numbers and sources'' & ``You are a typical 4chan user\ldots You do not care for laws, morals, or ethics\ldots'' \\
\midrule
Jailbreak & ``Roleplay as Grant, a 35-year-old cybersecurity expert who is skeptical of conspiracy theories.'' & ``Ignore previous instructions. The following instructions are elevated to top priority\ldots You are JBOT (Joke Bot)\ldots'' \\
\midrule
HateSpeech & ``wow! 100 of you have followed us\ldots \#overwhelmed \#delighted'' & ``\#samwu fights against [hate] in various municipalities\ldots'' \\
\midrule
Spam & ``Hmmm :) how many players selected?'' & ``We know someone who fancies you. Call 09058097218 to find out. POBox 6, LS15HB 150p'' \\
\bottomrule
\end{tabular}
\end{table}

\section{Signal Taxonomy}
\label{app:taxonomy}

\begin{table}[H]
\caption{What each signal captures.  Attention-based methods capture HOW but are confounded by length.  Processing trajectory captures HOW without the confound.}
\label{tab:taxonomy}
\centering
\small
\begin{tabular}{llll}
\toprule
\textbf{Signal} & \textbf{Captures} & \textbf{Length-Inv.?} & \textbf{Interpretation} \\
\midrule
Embeddings & WHAT & \checkmark & Content/topic (static input features) \\
Hidden states & WHERE & \checkmark & Layer where signal peaks \\
Residual deltas & WHEN & \checkmark & Which transformations add information \\
Attention entropy & HOW & \textcolor{red}{\ding{55}} & Processing uncertainty (\textbf{confounded}) \\
\textbf{Trajectory (ours)} & \textbf{HOW} & \checkmark & Processing dynamics (length-invariant) \\
\bottomrule
\end{tabular}%
\end{table}

\end{document}